\documentclass{article}

\usepackage{arxiv}

\usepackage{amsmath,amssymb,amsthm}
\usepackage{xspace}
\usepackage{epsfig}
\usepackage{makecell}
\usepackage{wasysym}
\usepackage{csquotes}
\usepackage{algorithm}
\usepackage[noend]{algorithmic}
\usepackage{color}
\usepackage{url}
\usepackage{booktabs,multirow}

\newtheorem{theorem}{Theorem}
\newtheorem{lemma}[theorem]{Lemma}
\newtheorem{corollary}[theorem]{Corollary}
\newcommand{\ignore}[1]{}
\renewcommand{\epsilon}{\varepsilon}

\newcommand{\E}[1]{\text{E}\left(#1\right)}

\newcommand{\Prob}[1]{\text{Pr}\left(#1\right)}

\newcommand{\EA}{\text{(1+1)~EA}\xspace}

\newcommand{\lRLS}{(1+$\lambda$)~RLS\xspace}

\newcommand{\diam}{\mathrm{diam}}

\newcommand{\chromatic}{\chi}

\hyphenation{di-lem-ma}

\definecolor{black}{rgb}{0.0,0.0,0.0}
\definecolor{grey}{rgb}{0.3,0.3,0.3}

\usepackage{tikz}
\usepackage{verbatim}
\usetikzlibrary{arrows,shapes,fit}

\tikzstyle{color1}=[fill=red!40!white]
\tikzstyle{color2}=[fill=blue!40!white]
\tikzstyle{color3}=[fill=green!50!white]
\tikzstyle{color4}=[fill=orange!50!white]
\tikzstyle{color5}=[fill=violet!30!white]
\tikzstyle{color6}=[fill=yellow!30!white]
\tikzstyle{vertex}=[anchor=center,draw,circle,inner sep=0cm,minimum size=.4cm, fill=white]


\newcommand{\thmref}[1]{{\footnotesize{}\color{gray!60!black}[Thm~\ref{#1}]}}

\newcommand{\aware}{tailored\xspace}
\newcommand{\Aware}{Tailored\xspace}
\newcommand{\unaware}{generic\xspace}

\title{More Effective Randomized Search Heuristics for Graph Coloring Through Dynamic Optimization}
\date{ }

\author{
  Jakob Bossek \\
  Optimisation and Logistics\\
  The University of Adelaide\\
  Adelaide, Australia
  \And 
  Frank Neumann \\
  Optimisation and Logistics\\
  The University of Adelaide\\
  Adelaide, Australia
  \And
  Pan Peng \\
  Dept. of Computer Science \\
  University of Sheffield \\
  Sheffield, United Kingdom
  \And
  Dirk Sudholt \\
  Dept. of Computer Science \\
  University of Sheffield \\
  Sheffield, United Kingdom
}

\begin{document}
\maketitle

\begin{abstract}
Dynamic optimization problems have gained significant attention in evolutionary computation as evolutionary algorithms (EAs) can easily adapt to changing environments. We show that EAs can solve the graph coloring problem for bipartite graphs more efficiently by using dynamic optimization. In our approach the graph instance is given incrementally such that the EA can reoptimize its coloring when a new edge introduces a conflict. We show that, when edges are inserted in a way that preserves graph connectivity, Randomized Local Search (RLS) efficiently finds a proper 2-coloring for all bipartite graphs. This includes graphs for which RLS and other EAs need exponential expected time in a static optimization scenario.
We investigate different ways of building up the graph by popular graph traversals such as breadth-first-search and depth-first-search and analyse the resulting runtime behavior.
We further show that offspring populations (e.\,g.\ a (1+$\lambda$)~RLS) lead to an exponential speedup in~$\lambda$. Finally, an island model using 3 islands succeeds in an optimal time of $\Theta(m)$ on every $m$-edge bipartite graph, outperforming offspring populations. This is the first example where an island model guarantees a speedup that is not bounded in the number of islands. 
\end{abstract}

\keywords{Evolutionary algorithms \and dynamic optimization \and running time analysis \and theory}

\section{Introduction}
Evolutionary computing techniques have been applied to a wide range of problems that involve stochastic and/or dynamic environments~\cite{DBLP:series/isrl/RichterY13}. These methods can easily adapt to new environments which makes them well suited to deal with dynamic changes~\cite{branke2012evolutionary,nguyen2012evolutionary}. Understanding the principle of reoptimization carried out by an evolutionary algorithm for a dynamically changing problem is an important task and we contribute to this area by studying dynamic variants of the well-known graph coloring problem. Our main message is that a static combinatorial optimization problem may be solved more efficiently in a dynamic setup than in a static one.

Studies around dynamic optimization in the context of evolutionary algorithms have focused on the type, magnitude and frequency of changes that occur in the problem that is changing dynamically over time. Different types of experimental and theoretical studies have been carried out. Those experimental studies usually consider a benchmark that may be obtained from a classical static problem by applying specific dynamic changes to the static problem formulation over time~\cite{DBLP:conf/ppsn/Roostapour0N18,DBLP:journals/corr/abs-1811-07806}. 
A wide range of studies on the runtime behavior of evolutionary computing techniques for dynamic and stochastic problems have been carried out in recent years. We refer the reader to \cite{BookDoeNeu} for an overview. These studies build on a larger body of mathematical methods for the analysis of evolutionary computing techniques developed over the last 20 years (see~\cite{DBLP:books/daglib/0025643,Auger11,ncs/Jansen13,BookDoeNeu}~for comprehensive presentations).
Theoretical investigations in terms of runtime analysis for dynamic problems usually focus on the reoptimization time which measures the amount of time that an algorithm needs to recompute an optimal solution when a dynamic change has happened to a static problem for which an optimal solution has been obtained. Other studies for $\mathcal{NP}$-hard problems also consider the task of recomputing a good approximation after a dynamic change has occurred. Such studies include makespan scheduling~\cite{DBLP:conf/ijcai/NeumannW15}, the minimum vertex cover problem~\cite{DBLP:conf/gecco/PourhassanGN15,DBLP:conf/ssci/PourhassanRN17,DBLP:conf/gecco/ShiNW18}, a dynamic constraint changes in the context of submodular optimization~\cite{DBLP:journals/corr/abs-1811-07806}.

We investigate the classical graph coloring problem that has already been studied in the context of evolutionary algorithms.
For the static problem, Fischer and Wegener~\cite{Fischer2005} considered a problem inspired by the Ising model from physics, where vertices of a graph need to be colored with the same color. On bipartite graphs, this corresponds to the classical graph coloring problem with 2 colors. They showed that on cycles, the \EA has expected optimization time $\Theta(n^3)$ under a reasonable assumption, but a simple (2+1)~Genetic Algorithm with 2-point crossover and fitness sharing succeeds in expected time $O(n^2)$.
Sudholt~\cite{Sudholt2005} considered the same problem on complete binary trees. He showed that, while ($\mu$+$\lambda$)~EAs take exponential expected time, the aforementioned  (2+1)~Genetic Algorithm finds an optimum in expected time $O(n^3)$.
Sutton~\cite{Sutton2016} presented bipartite graphs on which the (1+1)~EA needs superpolynomial time, with high probability. 
Sudholt and Zarges~\cite{Sudholt2010b} considered iterated local search algorithms in a different representation, where algorithms operate with an arbitrary number of colors, but the fitness function encourages the evolution of small color values. They considered mutation operators that can recolor large parts of a graph, based on so-called \emph{Kempe chains}. Along with a local search algorithm for graph coloring, iterated local search is shown to efficiently 2-color all bipartite graphs and to color all planar graphs with maximum degree at most~6 with at most~5 colors.
Recently, Bossek and Sudholt~\cite{Bossek2019a} also studied the performance of \EA and RLS for the edge coloring problem, where edges instead of vertices have to be colored such that no two incident edges share the same color, and the number of colors is minimized. 

Bossek \emph{et al.}~\cite{BNPS2019} considered a dynamic graph coloring problem where an edge is inserted into a properly colored graph. The authors analyze the expected time for the \EA, Randomized local search (RLS) and two iterated local search algorithms from~\cite{Sudholt2010b} to rediscover a proper coloring in case the newly added edge introduces a conflict. They consider 2-coloring bipartite graphs and 5-coloring planar graphs with maximum degree~6 as in~\cite{Sudholt2010b}. The authors show that dynamically adding an edge can lead to very hard symmetry problems that, in the worst case, may be harder to solve than coloring a graph  from scratch. On binary trees, RLS can easily get stuck in local optima and the \EA needs exponential expected time. 

\subsection{Our Contribution}
We consider the classical graph coloring problem and show that dynamic optimization can be helpful for this problem if the input graph is given to the algorithm incrementally based on an order determined by graph traversals.
Our investigations provide additional insights to a wide range of studies of evolutionary algorithms and other search heuristics that examine the computational complexity of these methods on instances of the graph coloring problem in static and dynamic environments. 

We consider an important aspect that bridges these static and dynamic studies to a certain extent. 
We are interested in whether giving an evolutionary algorithm the input graph in an incremental way and optimizing the resulting dynamic problem can lead to a faster optimization process than giving the algorithm the whole input at once as done in a standard static setting.
Our focus is on bipartite graphs, that is, the final graph resulting from the edge sequence is bipartite, which corresponds to the classical graph coloring problem with $2$ colors. This problem is polynomial time solvable in the context of problem specific algorithms. On the other hand, it is $\mathcal{NP}$-complete to decide if a given graph admits a $k$\nobreakdash-coloring for $k\geq 3$ \cite{garey1974some}. Furthermore, even if the input graph $G$ is promised to be $3$-colorable, it is $\mathcal{NP}$-hard to color $G$ with $4$ colors~\cite{guruswami2000hardness}.  

We examine a dynamic variant of the graph coloring problem in bipartite graphs where edges of a given static instance are made available to the algorithm over time. We show that, if the edges are provided in an order that preserves the connectivity of the graph, even the simple RLS can find proper colorings for all bipartite graphs efficiently. This is surprising since in the static setting, RLS fails badly even on simple bipartite graphs such as trees~\cite{BNPS2019}.
We further show that the order of edges is crucial: if edges are provided in a worst-case or random order, RLS only has an exponentially small probability of ever finding a proper 2-coloring on worst-case graph instances.
Specifically, we assume that the order in which the edges are made available is determined by a graph traversal algorithm. We study the reoptimization time after a given edge has created a conflict and show that the use of graph traversals leads to an efficient optimization process for a wide range of graph classes where evolutionary algorithms for the static setting (where the whole graph is given right at the beginning) fail. 
We pay special attention to popular graph traversal algorithms such as depth first search (DFS) and breadth-first-search (BFS) and show the difference that a choice between them may make with respect to the optimization time when carrying out dynamic graph coloring for bipartite graphs. 

Finally, we investigate speed ups that can be gained when using offspring populations and parallel dynamic reoptimization based on island models. We show that offspring populations of logarithmic size can decrease the expected optimization time by a linear factor. Island models that try to rediscover a proper coloring from the same initial coloring after adding an edge can benefit from independent evolution. It turns out that just using 3 islands leads to an asymptotically optimal runtime. This is one of very few examples where island models are proven to be more efficient than offspring populations and the first example where the speedup is not bounded in the number of islands.
Our results are summarized in Table~\ref{tab:all-times-incremental}.

The paper is structured as follows. In Section~\ref{sec2}, we introduce the graph coloring problem and the incremental reoptimization approaches that are subject to our analysis. In Section~\ref{sec:RLS_efficient_with_any_graph_traversal}, we show that RLS is efficient with any graph traversal, while Section~\ref{sec:graph-traversal-is-important} shows that not using graph traversals may be hugely inefficient. We carry out more detailed investigations when using BFS and DFS in Section~\ref{sec:choice_of_graph_traversal}. We show the benefit of using large enough offspring populations in Section~\ref{sec:offspring} and the benefit of parallel incremental reoptimization based on island models in Section~\ref{sec:island}. 

\begin{table}[htbp]
\caption{Worst-case expected times in the setting of adding edges incrementally to build up a whole bipartite graph for \unaware RLS (see Section~\ref{sec2}), \aware (1+$\lambda$)~RLS (see Section~\ref{sec:offspring}) and island models (see Section~\ref{sec:island}). We denote the length of the longest simple path by $\ell(G)$ and the diameter by $\diam(G)$.}
\label{tab:all-times-incremental}
\centering
\renewcommand*{\arraystretch}{1.3}
\renewcommand*{\tabcolsep}{8pt}
\begin{scriptsize}
\begin{tabular}{lccc}
\toprule
{\bf Edge insertion order} & {\bf \unaware RLS} & \textbf{Tailored $(1 + \lambda)$ RLS} & \textbf{$\mu$ Islands} \\
\midrule
Any connectivity-preserving & $O(\ell(G)n^2 + m)$~\thmref{thm:incremental-rls-any-bipartite} & $O(\lambda m + \lambda 2^{-\lambda} \ell(G)n)$~\thmref{thm:upper-bound-for-offspring-populations} & $\Theta(m)$~\thmref{thm:island-model} \\
DFS traversal & $O(\ell(G)n^2 + m)$~\thmref{thm:incremental-rls-any-bipartite}  & $O(\lambda m + \lambda 2^{-\lambda} \ell(G)n)$~\thmref{thm:upper-bound-for-offspring-populations} & $\Theta(m)$~\thmref{thm:island-model} \\ 
BFS traversal & $O(\diam(G)n^2 + m)$~\thmref{thm:incremental-rls-bipartite-with-bfs} & $O(\lambda m + \lambda 2^{-\lambda} \mathrm{diam}(G)n)$~\thmref{thm:upper-bound-for-offspring-populations} & $\Theta(m)$~\thmref{thm:island-model} \\
\midrule
Random / worst-case insertion order & \multicolumn{3}{c}{$\infty$ (w.h.p.)~\thmref{thm:incremental-rls-worst-case-order}} \\
\bottomrule
\end{tabular}
\end{scriptsize}
\end{table}

\section{Preliminaries}
\label{sec2}
Let $G = (V, E)$ denote an undirected graph with vertices~$V$ and
edges~$E$. We denote by $n := |V|$ the number of vertices and by $m := |E|$ the number of edges in $G$. We assume in the following that all considered graphs are connected (as otherwise connected components can be colored separately).
By $\ell(G)$ we denote the length of the longest simple path (number of edges) between any two vertices in the graph. The diameter $\mathrm{diam}(G)$ is the maximum number of edges on any \emph{shortest} path between any two vertices.

A \emph{vertex coloring} of $G$ is an assignment $c : V \to \{1, \ldots, n\}$ of color values to the vertices of~$G$.
Let $\deg(v)$ be the degree of a vertex~$v$ and $c(v)$ be its color in the current coloring.
Every edge $\{u, v\} \in E$ where $c(v) = c(u)$ is called a \emph{conflict}. A color is called \emph{free} for a vertex $v \in V$ if it is not assigned to any neighbor of $v$. The chromatic number $\chromatic(G)$ is the minimum number of colors that allows for a conflict-free coloring. A coloring is called \emph{proper} if there is no conflicting edge.




We use the most common representation for graph coloring: the total number of colors is fixed and the objective function is to \emph{minimize the number of conflicts}. Since we only consider 2-coloring bipartite graphs, we can use the standard binary representation that assigns each vertex a color from $\{0, 1\}$. 
We use the notion of ``flipping'' vertices, by which we mean that the bit corresponding to the vertex' color is flipped.

The well-known randomized local search (RLS) is defined as follows. Assume that the current solution is $x$. In every iteration a single vertex color is flipped to produce $y$. Next, $x$ is replaced by $y$ if the fitness of $y$ is no worse than its parent fitness (see Algorithm~\ref{alg:rls}).
We consider all algorithms as infinite processes as we are mainly interested in the expected number of iterations until good solutions are found or rediscovered.
\begin{algorithm}[htb]
    \caption{RLS ($x$)}
    \algsetup{indent=1.5em}
    \begin{algorithmic}[1]
        \WHILE{optimum not found}
        \STATE Generate $y$ by choosing an index $i \in \{1, \dots, n\}$ uniformly at random and flipping bit~$i$.
        \STATE If $y$ has no more conflicts than $x$, let $x := y$.
        \ENDWHILE
    \end{algorithmic}
    \label{alg:rls}
\end{algorithm}

Similar to~\cite{BNPS2019}, we also consider a \aware RLS algorithm that only mutates vertices that are involved in conflicts (see Algorithm~\ref{alg:multi-aware-rls}). We sometimes refer to the original RLS as \emph{\unaware RLS} as opposed to \aware RLS.

\begin{algorithm}[htb]
    \caption{\Aware RLS ($x$)}
    \algsetup{indent=1.5em}
    \begin{algorithmic}[1]
        \WHILE{optimum not found}
        \STATE Generate $y$ by choosing a vertex $w$ uniformly at random from all vertices that are part of a conflict. Flip the color of~$w$. 
        \STATE If $y$ has no more conflicts than $x$, let $x := y$.
        \ENDWHILE
    \end{algorithmic}
    \label{alg:multi-aware-rls}
\end{algorithm}


We consider a setting of building up and re-optimizing a graph incrementally, a setting termed as \emph{incremental reoptimization} (IR) in the following. To be more precise, given a graph $G = (V, E)$ with $n$ nodes and $m$ edges, we start with an empty $n$-vertex graph $G' = (V, E')$ with $E' = \emptyset$ and assign colors to the nodes uniformly at random. Note, that $G'$ initially has no edges and hence no conflicts occur regardless of the colors assigned. Next, we subsequently add single edges to $E'$ according to a given order $\pi$ of the edges $e_1, \ldots, e_m \in E$, one by one, and re-optimize with algorithm $\mathcal{A}$, e.g., \unaware RLS, between edge insertions (see Algorithm~\ref{alg:increopt}).

\begin{algorithm}[htb]
    \caption{Incremental Reoptimization (IR) ($G \!=\! (V, \{e_1, \dots, e_m\})$, $\pi$, $\mathcal{A}$)}
    \algsetup{indent=1.5em}
    \begin{algorithmic}[1]
        \STATE Let $G' = (V, E')$ be a graph with $n=|V|$ isolated vertices ($E' = \emptyset$).
        \STATE Let $x$ be a coloring of all vertices, chosen uniformly at random.\!\!
        \FOR{$i = 1$ to $|E|$}
        \STATE Add edge $e_{\pi(i)}$ to $E'$.
        \STATE Run $\mathcal{A}$ on $G'$ with $x$ as the initial search point. Stop when a desired coloring has been obtained and store the final search point in $x$.
        \ENDFOR
    \end{algorithmic}
    \label{alg:increopt}
\end{algorithm}
\paragraph{\textbf{Graph traversal.}} Let $\pi$ be a sequence of edges $e_1,\cdots,e_m$ with endpoints in $V$. Let $G=(V,E)$ be the graph with $E=\{e_1,\cdots,e_m\}$. We will consider a special type of order $\pi$ that maximally preserves the connectivity. 
More precisely, for any $i\geq 1$, we let $G_i'$ be the \emph{edge-induced} subgraph of $G$ that is induced by the set of the first $i$ edges $\{e_1,\cdots,e_i\}$. That is, the edge set of $G'_i$ is $\{e_1,\cdots,e_i\}$ and the vertex set of $G_i'$ is the set of vertices that are endpoints of $e_j$, $1\leq j\leq i$. Note that $V(G_i')$ might be a strict subset of $V$. Now the order $\pi$ is called a \emph{graph traversal order} of $G$ if for any $i\geq 2$, the number of connected components (CCs) of $G_i'$ is at least the number of CCs of $G_{i-1}'$. In other words, an edge insertion can never link two CCs, which would reduce the number of CCs. Instead, the graph traversal needs to fully build one connected component before moving on to the next one.
Once an edge $\{u,v\}$ from some CC $C$ in $G$ appears, then the next edges gradually build a connected subgraph surrounding $u$ until all the edges in $C$ have appeared. After that, a different CC will be built, and so on. 

We call the order a \emph{Breadth-First-Search (BFS) traversal} or \emph{order}, if the ordering can be obtained by first selecting some starting vertex $v$ from each connected component, and then following edges in the same way that a breadth-first-search starting at $v$ would explore the connected component. A \emph{Depth-First-Search (DFS) traversal} or \emph{order} can be defined similarly except that  depth-first search is used. Note that both BFS and DFS traversal are special cases of graph traversal orders defined before.

\section{RLS is efficient with any graph traversal}
\label{sec:RLS_efficient_with_any_graph_traversal}

Our main research question is whether incremental optimization leads to efficient runtimes on subclasses of bipartite graphs if $\mathcal{A}$ is set to RLS. Recall that the worst-case expected time for discovering or re-discovering proper 2-colorings for bipartite graphs is infinite as demonstrated for binary trees in \cite{BNPS2019}. The key idea to prove the latter was to complete an $n$-vertex binary tree by adding a single edge which leads to strong symmetry problems if the linked parts are colored inversely.

It turns out that for IR in order to find proper 2-colorings of bipartite graphs efficiently, the order $\pi$ of edge insertions is crucial. This aspect will be further investigated in Section~\ref{sec:choice_of_graph_traversal}. For now we formulate the following general result:
\begin{theorem}
\label{thm:incremental-rls-any-bipartite}
Let $\ell(G)$ be the length of the longest path in~$G$. On every bipartite graph~$G$, the total expected time of IR with \unaware RLS to incrementally build a proper 2-coloring is at most $2n^2\ell(G) + m$ when edges are added in an order given by a graph traversal.
\end{theorem}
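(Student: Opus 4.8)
The plan is to decompose the whole run into the $m$ individual edge insertions and charge the work to separate conflict repairs. The first step is a structural observation: since we always re-optimize to a \emph{proper} coloring before the next insertion, and the insertion order is connectivity-preserving on a bipartite $G$, I claim that an inserted edge can create a conflict \emph{only} when it attaches a brand-new vertex. Indeed, if both endpoints $x,y$ of the new edge are already present, the graph-traversal order forbids merging two components, so $x$ and $y$ lie in the same connected component $C$ of the current graph. As $C$ is connected and bipartite, its proper $2$-coloring is unique up to swapping the two colors, and hence it agrees with the bipartition of $G$ restricted to $C$; since $\{x,y\}\in E(G)$ and $G$ is bipartite, $x$ and $y$ sit on opposite sides of that bipartition and therefore receive different colors, so no conflict arises. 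Consequently only the \enquote{tree edges} of the traversal (those introducing a new vertex) can ever create a conflict, and there are at most $n-1$ of them; moreover every such conflict, at the moment it appears, sits on an edge incident to a degree-$1$ vertex (the freshly added leaf, or one of the two leaves of a new $K_2$ component).

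Next I would analyse the cost of repairing one such conflict. Because RLS never accepts a move that increases the number of conflicts, throughout a repair there is always exactly one conflicting edge, i.e.\ exactly one edge crossing the cut $(S,\overline{S})$, where $S$ is the set of vertices whose color differs from the target bipartition. A single crossing edge forces that edge to be a bridge and $S$ to be precisely one of its two sides; tracing the accepted moves then shows that $S$ remains a prefix of the maximal simple path of degree-$\le 2$ vertices (the \emph{backbone}) emanating from the initial leaf. The key sub-step is to classify the accepted single-vertex flips: only the two endpoints of the current conflict edge are ever movable, flipping a degree-$2$ boundary vertex is a neutral move (cut change $0$) that shifts the conflict by one edge, a branch vertex of degree $\ge 3$ blocks the move and hence acts as a reflecting barrier, and a leaf absorbs the conflict (resolving it).

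Hence the position of the conflict performs a fair random walk on $\{0,1,\dots,M\}$ with $M\le \ell(G)$ (the backbone is a simple path, so its length is bounded by the longest simple path), started at position $1$, absorbing at $0$, and either absorbing or reflecting at $M$. A standard hitting-time computation gives an expected number of at most $2M-1 = O(\ell(G))$ \emph{productive} moves before absorption; since in each RLS step a movable vertex is chosen with probability at least $1/n$, each productive move costs at most $n$ iterations in expectation, so one repair takes expected time $O(n\,\ell(G))$, at most roughly $2n\ell(G)$ with the above estimate. Multiplying by the at most $n-1$ conflict-creating insertions and adding an $O(m)$ overhead for processing the conflict-free insertions yields the claimed $2n^2\ell(G)+m$.

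The main obstacle is establishing the structural invariants rather than the arithmetic: namely that back edges never create conflicts, that during a repair the mis-colored set $S$ stays a single path-prefix with exactly one crossing edge, and the precise characterization of which flips RLS accepts (fair $\pm 1$ moves at degree-$2$ vertices, reflection at branch vertices, absorption at leaves). These all rely crucially on bipartiteness, on the connectivity of each current component, and on the connectivity-preserving insertion order; once they are in place, the gambler's-ruin hitting-time bound and the spanning-forest counting of conflicts are routine.
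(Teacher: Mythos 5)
Your proposal is correct and follows essentially the same route as the paper's proof: only the at most $n-1$ vertex-attaching insertions can create a conflict, the single conflict then performs a fair random walk over at most $\ell(G)$ states (shifted through degree-$2$ vertices, reflected at vertices of degree at least~$3$, absorbed at leaves), and the gambler's-ruin hitting-time bound of $O(\ell(G))$ relevant steps times the $O(n)$ waiting time per step, plus one evaluation for each conflict-free insertion, yields $2n^2\ell(G)+m$. The only cosmetic difference is that you justify conflict-freeness of cycle-closing insertions via the uniqueness (up to color swap) of the proper $2$-coloring of a connected bipartite component, whereas the paper argues via the even length of the closed cycle and the alternating colors along it.
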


To prove Theorem~\ref{thm:incremental-rls-any-bipartite}, we make use of two folklore random walk results. The presentation is adapted from~\cite[Lemma~A.1]{Bossek2019a}.
\begin{lemma}
\label{lem:random-walk-tools}
Consider a fair random walk $X_t$ on $\{0, \dots, k\}$ where 0 is an absorbing state and $k$ is a reflecting state. More formally, abbreviating $p_{i, j} := \Prob{X_{t+1} = j \mid X_t = i}$, 
for all $0 < i < k$, $p_{i, i+1} = p_{i, i-1} = 1/2$, $p_{0, 0} = 1$ and $p_{k, k-1} = 1$.
Let $T_0$ be the first hitting time of state~0 and $T_{0, k}$ be the first hitting time of either state 0 or~$k$.
Then the following statements hold:
\begin{enumerate}
    \item For all $X_0$, $\E{T_0 \mid X_0} = X_0(2k-X_0-1)$.
    \item For all $X_0$ and all $r \in \mathbb{N}$, $\Prob{T_0 \ge 2rk^2 \mid X_0} \le 2^{-r}$.
    \item For all $X_0$, $\E{T_{0, k} \mid X_0} = X_0(k-X_0)$.
\end{enumerate}
All statements also hold for a lazy random walk with a self-loop probability of $1-p$, when multiplying all time bounds by~$1/p$.
\end{lemma}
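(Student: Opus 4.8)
The plan is to obtain parts~1 and~3 together from a single first-step analysis, to derive the tail bound in part~2 from part~1 by a restart argument, and to reduce the lazy case to the fair case via the geometric slowdown induced by self-loops.

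First I would set up the expected hitting times. Writing $h_i := \E{T_0 \mid X_0 = i}$ and $g_i := \E{T_{0,k} \mid X_0 = i}$ and conditioning on the first step, every interior state $0 < i < k$ obeys the same recurrence $h_i = 1 + \frac{1}{2} h_{i-1} + \frac{1}{2} h_{i+1}$, i.e. $h_{i+1} - 2 h_i + h_{i-1} = -2$, and likewise for $g$. This is an inhomogeneous linear second-order difference equation with general solution $h_i = A + Bi - i^2$ (the affine homogeneous part plus the particular solution $-i^2$). The two quantities differ only through their boundary conditions: for $T_{0,k}$ both ends are absorbing, giving $g_0 = g_k = 0$; for $T_0$ the state $0$ is absorbing ($h_0 = 0$) while $k$ is reflecting, contributing the one-sided relation $h_k = 1 + h_{k-1}$. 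Substituting the boundary conditions pins down $A$ and $B$ in each case, yielding $g_i = i(k-i)$ and the claimed quadratic for $h_i$. As a cross-check for the reflecting case I would unfold the barrier, folding a fair walk on $\{0, \dots, 2k\}$ absorbed at both ends onto $\{0, \dots, k\}$ via $k+j \mapsto k-j$; the fold point then acquires exactly the transition $p_{k,k-1} = 1$, so $h_i$ reduces to a two-sided absorption time of the type in part~3 on the doubled interval.

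For part~2 I would extract from part~1 the uniform bound $\E{T_0 \mid X_0} \le k^2$, which holds for every starting state because the quadratic is maximised over $\{0, \dots, k\}$ at a value at most $k^2$. Markov's inequality then gives $\Prob{T_0 \ge 2k^2 \mid X_0} \le 1/2$ regardless of $X_0$. The decisive step is a restart argument: I split time into consecutive phases of length $2k^2$ and apply the strong Markov property at each phase boundary. Conditioned on $0$ not yet being reached, the chain sits in some state of $\{1, \dots, k\}$, from which, by the uniformity of the bound, it again fails to reach $0$ within the next $2k^2$ steps with probability at most $1/2$, independently of the past. Surviving $r$ phases, i.e. $T_0 \ge 2rk^2$, therefore has probability at most $2^{-r}$.

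Finally, for the lazy walk with self-loop probability $1-p$ I would note that deleting self-loops recovers exactly the fair walk, so the number of genuine moves until absorption is distributed as before, while each move is preceded by a $\mathrm{Geometric}(p)$ number of real steps of mean $1/p$; taking expectations multiplies the bounds of parts~1 and~3 by $1/p$, and the phase argument of part~2 goes through verbatim on the inflated time scale $2rk^2/p$. I expect the reflecting boundary in part~1 to be the only spot requiring care: the difference-equation solution is routine, but translating ``reflecting at $k$'' into the correct one-sided relation $h_k = 1 + h_{k-1}$ (rather than the symmetric interior recurrence) is exactly where an off-by-one constant can creep in, which is why the unfolding cross-check is worth carrying out. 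The restart step is the conceptual heart of part~2, and its one subtlety is insisting on a bound uniform over all starting states, so that the strong Markov property delivers genuinely independent, identically bounded phase-failure probabilities.
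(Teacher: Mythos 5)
Your route is genuinely different from the paper's, because the paper barely has a proof of its own: it cites statements 1 and 2 verbatim from Lemma~A.1 of the edge-coloring paper of Bossek and Sudholt, and disposes of statement~3 by appeal to the folklore fair gambler's ruin. You instead give a self-contained derivation: the first-step recurrence $h_{i+1} - 2h_i + h_{i-1} = -2$ with particular solution $-i^2$ handles parts~1 and~3 uniformly (only the boundary conditions differ), the uniform expectation bound $\E{T_0 \mid X_0} \le k^2$ plus Markov's inequality and a strong-Markov restart over phases of length $2k^2$ yields part~2, and Wald's identity for the $\mathrm{Geometric}(p)$ slowdown handles the lazy case, with the phase length rescaled to $2k^2/p$. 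All of this is sound and standard; what it buys over the paper is a verifiable elementary argument where the paper only has a pointer, and your treatment of part~3 and part~2 is essentially the canonical proof one would find behind those citations anyway.

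There is, however, one concrete slip, and it sits exactly at the spot you yourself flagged as dangerous: your boundary analysis does \emph{not} reproduce the claimed quadratic in part~1. With $h_i = Bi - i^2$ (from $h_0 = 0$), the reflecting relation $h_k = 1 + h_{k-1}$ gives $h_k - h_{k-1} = B - 2k + 1 = 1$, hence $B = 2k$ and $h_i = i(2k - i)$, whereas the lemma states $X_0(2k - X_0 - 1)$. Your own unfolding cross-check confirms $i(2k-i)$: folding $\{0,\dots,2k\}$ at $k$ produces precisely the transition $p_{k,k-1} = 1$, and two-sided gambler's ruin on the doubled interval gives $i(2k-i)$. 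A sanity check at $k = 1$, $X_0 = 1$ shows the stated formula cannot be right for the chain as defined: $T_0 = 1$ deterministically, while $X_0(2k - X_0 - 1) = 0$. So the discrepancy lies in the lemma's statement (presumably a transcription from the cited source under a slightly different reflection convention), not in your method; the error is a lower-order additive term $X_0$ and harmless for every use in the paper, which only needs $\E{T_0 \mid X_0 = 1} = O(k)$ and the part-2 tail (your uniform bound becomes $\max_i i(2k-i) = k^2$, so the restart argument is untouched). But as written, your proposal asserts that substituting the boundary conditions yields ``the claimed quadratic,'' which a faithful execution of your own plan contradicts — you should either correct the constant or state the mismatch explicitly rather than paper over it.
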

\begin{proof}[Proof of Lemma~\ref{lem:random-walk-tools}]
The first two statements were shown in~\cite[Lemma~A.1]{Bossek2019a}. The third statement follows from the fair gambler's ruin scenario where one player starts with $X_0$ dollars and the other player starts with $k-X_0$ dollars and the game ends when either player is broke. It is well known that the expected time for the game to end is $X_0(k-X_0)$.
\end{proof}

\begin{proof}[Proof of Theorem~\ref{thm:incremental-rls-any-bipartite}]
Note that in our setting we start with an $n$-vertex graph with no edges at all, each vertex having color 0 or 1 with equal probability. Now we add edges incrementally in an order of a graph traversal. Since the graph is bipartite, adding a single edge $e = \{u, v\}$ links two vertices of different sets. This step may introduce at most one conflict if $c(u) \neq c(v)$. Note that this can happen only if one vertex, w.\,l.\,o.\,g.\ $v$, has degree one after insertion of $e$, i.e., $v$ has not yet been linked to the growing connected component before. Otherwise, $e$ closes a cycle~$C$. This cycle must be of even length since the graph is bipartite and the path $C \setminus \{e\}$ has alternating colors since the previous coloring was proper. Thus $u$ and $v$ must have different colors already. In this case, inserting $e$ does not create a conflict.

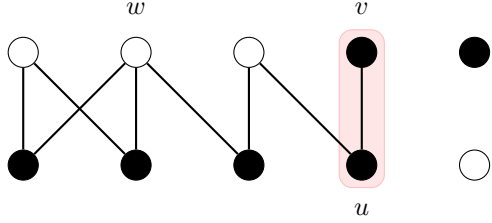
\begin{figure}
\centering
\begin{tikzpicture}[scale=1.5]

\node (dummy1) at (3,0) {};
\node (dummy2) at (3,1) {};
\node[rounded corners=5pt, inner sep=5pt, draw=red!25, fill=red!10, fit=(dummy1) (dummy2)] {};

\node[vertex, fill=black] (u1) at (0,0) {};
\node[vertex, fill=black] (u2) at (1,0) {};
\node[vertex, fill=black] (u3) at (2,0) {};
\node[vertex, fill=black, label={[label distance=5pt]below:{$u$}}] (u4) at (3,0) {};
\node[vertex, fill=white] (u5) at (4,0) {};

\node[vertex] (w1) at (0,1) {};
\node[vertex, label={[label distance=5pt]above:{$w$}}] (w2) at (1,1) {};
\node[vertex] (w3) at (2,1) {};
\node[vertex, fill=black, label={[label distance=5pt]above:{$v$}}] (w4) at (3,1) {};
\node[vertex, fill=black] (w5) at (4,1) {};

\draw[thick] (u1) -- (w1) -- (u2) -- (w2) -- (u3) -- (w3) -- (u4);
\draw[thick] (u1) -- (w2);
\draw (u4) edge[thick] (w4); 

\end{tikzpicture}
\caption{Snapshot of an IR iteration where a random walk might take place. Here, edge $\{u,v\}$ was added last in the course of incremental optimization and lead to a single conflict. Mutating $v$ resolves the conflict while mutating $u$ moves the conflict to the other edge incident with $u$. The conflict can then propagate further to the left where node $w$ serves as a reflecting node for the random walk.}
\label{fig:random-walk-example}
\end{figure}

Now, assume there is a conflict $\{u, v\}$ and let $v$ be the vertex with degree~$1$. Mutating~$v$ will resolve the conflict. However, if $u$ has degree~2, mutating~$u$ moves the conflict to the other incident edge at~$u$ (see Fig.~\ref{fig:random-walk-example} for an illustration).
This yields a random walk that can be mapped to the integers as follows. Let $d(u, v)$ be the graph distance, that is the smallest number of edges on any path between $u$ and $v$. 
If the conflict involves an edge $\{v_1, v_2\}$ then the current state is defined as
\[
1 + \min\{d(v_1, v), d(v_2, v)\}
\]
with an additional absorbing state~0 that is attained when the conflict is resolved. 
The random walk always starts in state~1 as initially $\{u, v\}$ is the conflicting edge.
The random walk is fair since flipping the vertex that is closer to~$v$ decreases the state by 1, and flipping the other vertex increases it by~1, if this mutation is accepted. It is accepted if and only if the mutated vertex has degree at most~2 as otherwise the number of conflicts increases. Hence, the random walk is reflected at the first vertex on the path from $v$ that has degree greater than~2; if there is no such vertex, there is another leaf at which the conflict can be resolved. The maximum state that can be reached is bounded by $\ell(G)$, i.\,e. the length of the longest path in~$G$ (since the closest vertex to~$v$ must have graph distance at most $\ell(G)-1$). This random walk requires at most $2\ell(G)$ relevant steps by Lemma~\ref{lem:random-walk-tools}. Each propagating step happens with probability at least $1/n$ and thus has waiting time $O(n)$. 

Finally, recall that every time an edge insertion closes a cycle no conflict is introduced at all as argued at the beginning of the proof. In these cases $\mathcal{A}$ terminates after a single fitness function evaluation. As a consequence, only the cases where an isolated node is linked for the first time may introduce a conflict. There are $n-1$ such steps. Hence the total runtime is
\begin{align*}
    2(n-1)n\ell(G) + (m-n+1) \le 2n^2\ell(G) + m. \qquad \qedhere
\end{align*}
\end{proof}

The upper bound from Theorem~\ref{thm:incremental-rls-any-bipartite} is tight on path graphs.

\begin{theorem}
\label{thm:incremental-rls-on-n-vertex-path}
On any path with $n$ nodes, the total expected time of IR with \unaware RLS to incrementally build a proper 2-coloring is $\Omega(n^3)$ when edges are added in an order given by a graph traversal.
\end{theorem}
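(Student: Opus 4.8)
The plan is to obtain a matching lower bound by tracking the expected time spent resolving the conflict created at each individual edge insertion, and then summing these contributions over all $n-1$ insertions. Fix any graph traversal order and write the path as $v_1 - v_2 - \cdots - v_n$. Because the growing edge-induced subgraph is always a contiguous sub-path, the $i$-th inserted edge (for $i = 1, \dots, n-1$) attaches a fresh leaf $w$ to one end of a sub-path that afterwards has exactly $i$ edges. First I would argue that this insertion creates a conflict with probability exactly $1/2$: the leaf $w$ was isolated until this moment, so its color is only ever changed by direct (always accepted) flips of RLS, which keep it uniform on $\{0,1\}$ and independent of the colors in the active component; hence $\Prob{c(w) = c(u)} = 1/2$, where $u$ is the unique neighbor of $w$.

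Next I would lower-bound the conditional expected time to repair such a conflict. As in the proof of Theorem~\ref{thm:incremental-rls-any-bipartite}, the single conflict performs a random walk along the current sub-path, where flipping the vertex closer to $w$ moves the conflict one edge toward $w$ and flipping the other endpoint moves it one edge away; every such move is accepted. The crucial difference from the generic upper bound is that a path contains no vertex of degree greater than~$2$, so \emph{both} endpoints of the sub-path are degree-$1$ vertices at which the conflict is resolved. The walk is therefore a fair gambler's ruin on the $i$ edge positions with two absorbing barriers, and since the conflict starts at the end edge incident with $w$ it begins adjacent to one barrier. By part~3 of Lemma~\ref{lem:random-walk-tools} (with $k = i+1$ and $X_0 = i$) the expected number of accepted, conflict-moving steps is exactly $i \cdot (i+1-i) = i$. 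Each such relevant step requires selecting one of the two vertices incident to the conflicting edge, which happens with probability $2/n$ per iteration, so the expected number of iterations per relevant step is $n/2$. Combining these via Wald's identity (equivalently, expected relevant steps divided by the per-iteration success probability) gives a conditional expected repair time of $i \cdot n/2$.

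Finally I would sum: the expected time contributed by insertion $i$ is at least $\frac12 \cdot \frac{in}{2} = \frac{in}{4}$, so the total expected time is at least $\sum_{i=1}^{n-1} \frac{in}{4} = \Omega(n^3)$. I expect the main obstacle to be the rigorous justification that each insertion creates a conflict with probability bounded below by a constant; this rests on the independence and uniformity of the fresh leaf's color, which must be tracked through the earlier reoptimization phases in which RLS may repeatedly flip still-isolated vertices. A secondary point worth stressing is conceptual rather than technical: a single conflict is cheap, costing only $\Theta(i)$ relevant steps because the walk starts next to an absorbing barrier; the cubic bound emerges only after summing the $\Theta(in)$ costs over all $n-1$ insertions.
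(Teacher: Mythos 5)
Your proposal is correct and takes essentially the same route as the paper: the same per-insertion conflict probability of exactly $1/2$ (with the fresh-leaf independence argument you flag, which the paper asserts implicitly), the same fair gambler's ruin with two absorbing barriers via part~3 of Lemma~\ref{lem:random-walk-tools} giving $\Theta(i)$ expected relevant steps, and the same $n/2$ waiting time per relevant step. The only difference is the final accounting: you sum the per-insertion expected costs directly by linearity of expectation, whereas the paper detours through a Chernoff bound guaranteeing at least $(n-1)/3$ random walks with high probability before summing --- your version is, if anything, slightly cleaner.
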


\begin{proof}
Consider an $n$-vertex path which is built incrementally starting from either one of its leaf nodes. After adding the $i$-th edge $e = \{u, v\}$, $1 \leq i \leq n-1$, with probability $1/2$ no conflict is introduced if by chance $c(u) \neq c(v)$. With the converse probability, if $u$ and $v$ have the same colors, a random walk with states $0, \dots, i$ is started, where both states $0$ and $i$ are goal states and the random walk starts in state~1. This random walk runs for at least $i-1$ relevant steps in expectation by Lemma~\ref{lem:random-walk-tools} and a relevant step happens with probability at most $2/n$. In total we add $n-1$ edges incrementally and all $n-1$ events of a random walk taking place are independent. There are $(n-1)/2$ such random walks in expectation. Note that, by Chernoff bound, the probability of having less than $(n-1)/3$ random walks is $1-e^{-\Omega(n)}$.
Let $j_1, \ldots, j_{(n-1)/3}$ be the steps a random walk takes place and note that $j_i \ge i$.
Then the expected time to incrementally reoptimize a path is bounded from below by
\begin{align*}
    \sum_{i=1}^{(n-1)/3} \frac{n (i-1)}{2} = \frac{n}{2} \sum_{i=1}^{(n-1)/3} (i-1) = \Omega(n^3).
\end{align*}
Here, the first term results from the fact that the length of the random walks is monotonically increasing. Note that $\Omega(n^3) = \Omega(mn^2)$ for paths since $m = \Theta(n)$.
\end{proof}

\begin{figure}[htb]
\centering
\begin{tikzpicture}[scale=1]
    \node[vertex] (v0) at (0,0) {};

    \node[vertex] (v1) at (-3,1) {};
    \node[vertex] (v2) at (-2,1) {};
    \node[vertex] (v3) at (-1,1) {};
    \node[vertex] (v4) at (1,1) {};
    \node[vertex] (v5) at (2,1) {};
    \node[vertex] (v6) at (3,1) {};
    \node[vertex] (v7) at (-3,0) {};
    \node[vertex] (v8) at (-2,0) {};
    \node[vertex] (v9) at (-1,0) {};
    \node[vertex] (v10) at (1,0) {};
    \node[vertex] (v11) at (2,0) {};
    \node[vertex] (v12) at (3,0) {};
    \node[vertex] (v13) at (-3,-1) {};
    \node[vertex] (v14) at (-2,-1) {};
    \node[vertex] (v15) at (-1,-1) {};
    \node[vertex] (v16) at (1,-1) {};
    \node[vertex] (v17) at (2,-1) {};
    \node[vertex] (v18) at (3,-1) {};
    
    \foreach \from/\to in {1/2, 2/3, 4/5, 5/6, 7/8, 8/9, 10/11, 11/12, 13/14, 14/15, 16/17, 17/18, 3/0, 4/0, 9/0, 10/0, 15/0, 16/0}{
        \draw (v\from) edge[solid] (v\to);
    }
        
\end{tikzpicture}
\caption{Example of a depth-$k$ star with $n = 19$ nodes and depth $k = 3$.}
\label{fig:depth-k-star}
\end{figure}

Paths are examples where the upper bound from Theorem~\ref{thm:incremental-rls-any-bipartite} is tight for a maximum value of $\ell(G)$, namely $\ell(G)=n$. We also show that there is a family of graphs for all (even) values of $\ell(G)$ for which the upper bound from Theorem~\ref{thm:incremental-rls-any-bipartite} is tight. Consider a generalization of the star-graph termed the \emph{depth-$k$ star} where we have one center node and $(n-1)/k$ paths originating in the center node (see Fig.~\ref{fig:depth-k-star} for an example), for some value $1 \le k \leq (n-1)/3$. (For simplicity we assume that $(n-1)/k$ is integer). Note that only the center node can have a degree greater $2$ and may serve as a reflecting node in the course of incremental optimization. Hence, the behavior of RLS is similar to its behavior on a path. In the following we show that the runtime bound from Theorem~\ref{thm:incremental-rls-any-bipartite} is tight on depth-$k$ stars for any reasonable choice of $k$.

\begin{theorem}
\label{thm:incremental-rls-depth-k-star}
On any depth-$k$ star with with $n$ nodes ($n$ odd) and $1 \leq k \leq (n-1)/3$, the expected time of \unaware RLS to build a proper 2-coloring is $\Omega(kn^2)$ when edges are added in an order given by a graph traversal.
\end{theorem}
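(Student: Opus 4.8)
The plan is to establish tightness of Theorem~\ref{thm:incremental-rls-any-bipartite} on this family by exhibiting one concrete graph-traversal order for which \unaware RLS needs expected time $\Omega(kn^2)$. I would take the traversal that builds the $(n-1)/k$ arms one after another, each arm from the center outward (for instance a DFS started at the center: it walks down one arm to its tip, backtracks, and proceeds to the next arm). The crucial structural fact is that the center is the only vertex of degree larger than~$2$, so every random walk triggered during reoptimization stays inside the single arm currently being built. As in the proof of Theorem~\ref{thm:incremental-rls-on-n-vertex-path}, when the $j$-th edge of an arm is inserted it attaches a fresh leaf whose colour is still its (uniform, independent) initial colour; hence with probability $1/2$ it introduces a conflict and starts a random walk, and otherwise reoptimization finishes after a single fitness evaluation.

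The key step is to identify these walks with the reflecting random walk of Lemma~\ref{lem:random-walk-tools}(1). Consider building the $i$-th arm with $i \ge 3$ and suppose inserting its $j$-th edge creates a conflict at the new tip. Flipping the tip resolves the conflict (the absorbing state~$0$), while flipping an interior vertex moves the conflict one step towards the center; once the conflict reaches the center, flipping the center is \emph{rejected}, since this would remove one conflict but create a new conflict on each of the $i-1\ge 2$ other arm edges incident to the center, a net increase. Thus the center is a reflecting barrier, the walk lives on $\{0,\dots,j\}$ and starts in state~$1$, and Lemma~\ref{lem:random-walk-tools}(1) with $X_0=1$ gives an expected number of at least $2(j-1)$ relevant steps. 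A relevant step requires flipping one of the two endpoints of the current conflict edge, so it occurs with probability at most $2/n$ and has expected waiting time at least $n/2$; a standard Wald-type argument then bounds the expected duration of this single walk from below by $n(j-1)$.

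Summing these contributions by linearity of expectation over all arms $i\in\{3,\dots,(n-1)/k\}$ and all edge positions $j\in\{1,\dots,k\}$, the expected total time is at least
\[
\left(\frac{n-1}{k}-2\right)\sum_{j=1}^{k}\frac{1}{2}\,n(j-1)
=\left(\frac{n-1}{k}-2\right)\frac{n\,k(k-1)}{4}.
\]
I would then invoke the hypothesis $k\le (n-1)/3$, which yields $\left(\frac{n-1}{k}-2\right)k = (n-1)-2k \ge \tfrac13(n-1)$, and conclude that the whole expression is $\Omega(kn^2)$ (the degenerate case $k=1$, where every conflicting arm edge already costs $\Omega(n)$ to repair, is handled separately).

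The main obstacle I anticipate is the justification of the reflecting barrier together with the attendant bookkeeping: the clean identification with Lemma~\ref{lem:random-walk-tools}(1) only holds once the center has degree at least~$3$, so the first two arms must be discarded, and one has to verify that the remaining $(n-1)/k-2$ arms still carry the full $\Omega(kn^2)$ order under the constraint $k\le(n-1)/3$. A secondary point needing care is the claim that each freshly inserted arm edge creates a conflict with probability exactly $1/2$ independently of the already-coloured part; this rests on the fact that an isolated vertex is never involved in a conflict, so its colour remains uniformly distributed and independent of the rest of the coloring until its first edge is added.
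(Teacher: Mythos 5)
Your proposal follows essentially the same route as the paper's proof: discard the first two arms, observe that once the center has degree at least~3 it acts as a reflecting barrier (flipping it removes one conflict but creates at least two, so the move is rejected), apply the first statement of Lemma~\ref{lem:random-walk-tools} with $X_0=1$ to get $2(i-1)$ expected relevant steps for the conflict created by the $i$-th edge of an arm, multiply by the $\Theta(n)$ waiting time per relevant step and the conflict probability $1/2$, and sum over the remaining $(n-1)/k-2$ arms using $k\le(n-1)/3$. The only difference is that you fix one concrete DFS-from-center insertion order whereas the paper's argument covers every graph traversal (using that the center must reach degree~3 within the first $2k+1$ insertions); since your random-walk analysis never actually uses that the arms are built consecutively, it extends verbatim, and your explicit handling of the degenerate case $k=1$ and of the fresh leaf's color remaining uniform are points the paper leaves implicit.
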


\begin{proof}
Note that $\ell(G) = 2k$ (as each path from one leaf to another is a longest path) for any depth-$k$ star. Note further that the center node reflects random walks once it reaches a vertex of degree~$3$ in the course of incremental optimization. This must happen after adding $2k+1$ edges since edges are added according to a graph traversal and the center node is the only link between paths. At the time a third edge at the center is added, there can only be two paths that have been built, or partially build. We consider the expected remaining time for adding the remaining $(n-1)/k-2$ paths. Note that for all these paths, the addition of edges must start from the center vertex and now the center node acts as a reflecting node for these random walks.

After adding the $i$-th edge $e=\{u,v\}$ of a path, with probability $1/2$ a random walk with states $0,\ldots,i$ starts. By Lemma~\ref{lem:random-walk-tools} this random walk runs for at least $2(i-1)$ steps in expectation and relevant steps take place with probability at most $2/n$. For a fixed path in total $k$ edges are added until a leaf node is connected to the growing connected component. 
Let $T_j$ be the number of generations spent fixing a conflict on the $j$-th path, then 
\[
\E{T_j} \ge \sum_{i=1}^{k} \frac{1}{2} \cdot 2(i-1) = \Omega(nk^2).
\]

By construction of the depth-$k$ star there are $(n-1)/k$ paths and two of these were covered in the first phase. Adding up all times spent on the remaining paths, the expected number of steps until the depth-$k$ star is properly colored with two colors is
\begin{align*}
    \sum_{j=1}^{(n-1)/k - 2} \E{T_j} = \left(\frac{(n-1)}{k} - 2\right)\Omega(nk^2) = \Omega(kn^2).\qquad  \qedhere
\end{align*}
\end{proof}

Recall that $\ell(G)=2k=\Theta(k)$. As a consequence, the runtime of IR with \unaware RLS with any graph traversal on any depth-$k$ star is tight for any valid choice of the graph parameter $k$. 


We finish this section by noting that, similarly to~\cite{BNPS2019}, the expected runtime can be reduced by using \aware RLS which reduces the waiting time for re-coloring the right vertex from $O(n)$ to $\Theta(1)$.

\begin{corollary}
\label{thm:incremental-aware-rls-any-bipartite}
On any bipartite graph, the total expected time of \aware RLS to incrementally build a proper 2-coloring is $O(\ell({G})n + m)$ when edges are added in an order given by a graph traversal.
\end{corollary}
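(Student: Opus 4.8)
The plan is to revisit the proof of Theorem~\ref{thm:incremental-rls-any-bipartite} and observe that its entire structure carries over to \Aware RLS unchanged, except for the waiting time of each relevant random-walk step. Recall that in the \unaware setting, each propagating step of the conflict-resolving random walk required flipping one specific vertex among all $n$ vertices, giving a waiting time of $O(n)$ per relevant step. The key insight for \Aware RLS is that mutation is restricted to vertices that are part of a conflict. When a single edge insertion creates at most one conflict, there are exactly two conflicting vertices, so the probability of selecting a relevant vertex is $1/2 = \Theta(1)$, reducing the per-step waiting time from $O(n)$ to $\Theta(1)$.

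First I would re-establish that the same reduction to a fair random walk holds. The argument from Theorem~\ref{thm:incremental-rls-any-bipartite} shows that adding an edge in graph-traversal order either closes an even cycle (creating no conflict, so \Aware RLS terminates immediately) or links a degree-one vertex $v$, creating exactly one conflict which induces a reflected fair random walk on states $0,\dots,\ell(G)$ starting in state~1. Crucially, throughout this walk there is always exactly one conflicting edge, hence exactly two conflicting vertices, and flipping either the vertex closer to $v$ or the other one corresponds to the $\pm 1$ transitions of the walk. Therefore \Aware RLS picks one of the two relevant vertices with probability~$1$ each iteration and performs a relevant step (one accepted by the fitness condition) with probability $\Theta(1)$, so the expected waiting time per relevant step is $\Theta(1)$ rather than $O(n)$.

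Next I would assemble the total runtime. By Lemma~\ref{lem:random-walk-tools}, each such random walk requires at most $2\ell(G)$ relevant steps in expectation, now each costing $\Theta(1)$ generations, so fixing a single conflict takes expected time $O(\ell(G))$. There are at most $n-1$ edge insertions that can introduce a conflict (those linking a fresh vertex), contributing $O(\ell(G) n)$ in total. The remaining $m-n+1$ insertions close cycles and cost one fitness evaluation each, contributing $O(m)$. Summing gives the claimed bound $O(\ell(G) n + m)$.

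I do not expect a genuine obstacle here, since the corollary is an immediate adaptation of the theorem; the only point requiring care is the justification that \Aware RLS's restriction to conflicting vertices never hurts the walk, i.\,e.\ that during the entire propagation phase the conflict remains localized to a single edge so that the two relevant vertices coincide with the two vertices the walk needs to flip. This follows because an accepted mutation never increases the number of conflicts and the walk is defined precisely by the movement of this unique conflict along the path; thus selecting uniformly among conflicting vertices is equivalent to selecting uniformly among the walk's two available moves, and the $\Theta(1)$ success probability is immediate.
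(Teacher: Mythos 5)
Your proposal is correct and matches the paper's reasoning exactly: the paper derives this corollary from Theorem~\ref{thm:incremental-rls-any-bipartite} by the single observation that \aware RLS, mutating only the (two) conflicting vertices, reduces the waiting time per relevant random-walk step from $O(n)$ to $\Theta(1)$, which is precisely your argument. Your additional care in checking that the conflict stays localized to a single edge (so that the restriction to conflicting vertices realizes the walk's two moves) is a valid and welcome elaboration of the same approach, not a different one.
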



\section{Graph Traversals are Important}
\label{sec:graph-traversal-is-important}

The following result emphasizes that the order of edge insertions is of utmost importance; an unfavorable order may lead to infinite runtimes for RLS with overwhelming probability. Furthermore, even if the order is uniformly random, it may still lead to infinite runtimes for RLS. Given a graph $G=(V,E)$, and an edge sequence $\pi$ over $E$, we say $\pi$ is a random order of $E$ or the graph if $\pi$ is chosen uniformly at random from the set of all possible permutations over~$E$.

\begin{theorem}
\label{thm:incremental-rls-worst-case-order}
For every $n = 1 \bmod 3$ there exists a tree $T_n$ and a worst-case edge insertion strategy such that RLS has infinite runtime with probability $1 - 2^{-\Omega(n)}$. Furthermore, for the random order of $T_n$, RLS has infinite runtime with probability $1 - 2^{-\Omega(n)}$.
\end{theorem}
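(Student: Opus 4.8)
The plan is to exhibit a single tree $T_n$ that defeats RLS both under an adversarial order and under a uniformly random order, by engineering a configuration in which a conflict is trapped at a strict local optimum from which no proper colouring is reachable. I take $T_n$ to be a \emph{spider of cherries}: a centre vertex $r$ together with $k := (n-1)/3$ gadgets, where gadget~$i$ consists of an inner vertex $u_i$ adjacent to $r$ and to two leaves $a_i, b_i$. This has $1+3k = n$ vertices and $3k = n-1$ edges, which is exactly why we require $n \equiv 1 \pmod 3$. The purpose of the cherries is that, once gadget~$i$ is complete, $u_i$ has degree~$3$; this is fundamentally different from the favourable case analysed in Theorem~\ref{thm:incremental-rls-any-bipartite}, since a conflict on $\{r, u_i\}$ can no longer random-walk through $u_i$ to a leaf and be resolved.

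The key structural step is a \emph{trapping lemma}: if two complete cherries $i \neq j$ are both attached to $r$ and have been properly coloured with $c(u_i) \neq c(u_j)$, then RLS can never reach a proper colouring. I would prove this by showing that every cherry vertex is frozen: flipping a degree-$3$ inner vertex creates at least two new conflicts while removing at most one, and flipping a leaf creates one conflict, so both moves are strictly worsening and rejected. Hence $c(u_i)$ and $c(u_j)$ can never change, and since they differ, at least one of $\{r, u_i\}, \{r, u_j\}$ is a conflict for either colour of $r$. Thus the conflict count can never reach~$0$, and as RLS never increases it, the trap is permanent and the runtime infinite.

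For the worst-case order I would insert all $2k$ leaf edges first, so that each cherry is built and properly $2$-coloured as an isolated component; by the colour symmetry of RLS the resulting inner colours $c(u_1), \dots, c(u_k)$ are independent and uniform on $\{0,1\}$. The probability that they are all equal is $2^{-(k-1)} = 2^{-\Omega(n)}$, so with probability $1 - 2^{-\Omega(n)}$ two of them differ, and once the centre edges are inserted the trapping lemma applies. For the random order I would call gadget~$i$ \emph{good} if the centre edge $\{r, u_i\}$ is the last of its three gadget edges to appear. Since the relative order inside each disjoint edge-triple is uniform and independent across gadgets, each gadget is good independently with probability $1/3$, so a Chernoff bound gives at least $k/4$ good gadgets with probability $1 - 2^{-\Omega(n)}$. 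For a good gadget the cherry is complete and properly coloured \emph{before} it is joined to $r$, so its inner colour freezes to a value that, by symmetry and disjointness, is uniform and independent across good gadgets. Hence with probability $1 - 2^{-\Omega(n)}$ two good gadgets receive different frozen colours, and since all centre edges are eventually present, the trapping lemma again yields infinite runtime.

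The main obstacle is the random-order analysis. I must argue that the frozen inner colours really are independent and uniform despite the insertions of different components being interleaved in time; this is handled by noting that each RLS step touches only one connected component and that a good cherry is finished before its centre edge arrives, so its colour is determined solely by that gadget's own vertices. I also must check that a trap formed at any intermediate insertion is permanent and cannot be circumvented by the (lateral or worsening) moves RLS attempts while later edges are still being added, which again follows from the freezing argument once two differently coloured complete cherries share the centre~$r$.
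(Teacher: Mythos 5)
Your proposal is correct and follows essentially the same route as the paper's own proof: the same tree (a root with $(n-1)/3$ children of degree~3, each carrying two leaves, i.e.\ your spider of cherries), the same worst-case order (complete all cherries first, then attach the centre edges), and the same random-order argument (a gadget is good with probability $1/3$ when its centre edge is last within its disjoint edge-triple, independence across triples plus a Chernoff bound, then two differing frozen inner colours with probability $1-2^{-\Omega(n)}$). Your explicit trapping/freezing lemma merely spells out in more detail what the paper states briefly, namely that the degree-3 children act as reflecting states so the conflict at the root can never be resolved.
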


\begin{proof}
We consider a tree $T_n$ where the root $r$ has $(n-1)/3$ children and each child of the root has two children. This means that on level 1 of $T_n$, we have $(n-1)/3$ binary trees of height~1. Now consider the following worst-case edge insertion strategy: first add edges such that all $(n-1)/3$ binary trees are formed (phase~1) and afterwards connect the root to its children (phase~2). Note that once two binary trees are colored inversely, RLS gets stuck forever since there is no possibility to color $r$ without conflicts after connecting both binary trees to the root. This is because the root's children -- once connected to the root -- have degree greater 2 and thus act as reflecting states for the random walk of the introduced conflict. Since in the first phase of the edge insertion all binary trees are unconnected and hence colored independently, the probability that they are all colored the same is $2 \cdot 2^{-(n-1)/3} = 2^{-\Omega(n)}$. Hence, the unfavorable situation occurs with probability $1 - 2^{-\Omega(n)}$.

Finally, if the edges are inserted in random order, i.e., the edge sequence is chosen uniformly at random from the set of all edge permutations over $E$, then for each height-$1$ binary tree with the vertex $v$ being the child of root $r$, the probability that both edges in the tree appear first before edge $(v,r)$ is $\frac{1}{3}$. We call a height\nobreakdash-$1$ binary tree \emph{bad} if both of its edges appear before the edge connecting $r$ to its child in the tree. Therefore, the expected number of bad binary trees is $\frac{1}{3} \cdot \frac{n-1}{3}=\frac{n-1}{9}$. Further note that all the bad binary trees occur independently due to the random order assumption. By Chernoff bound, with probability at least $1-2^{-\Omega(n)}$, the number of bad binary trees is at least $T\geq \frac12 \cdot \frac{n-1}{9}$. Finally, for all these bad binary trees, since they are unconnected to the rest of the tree when they are formed, and hence colored independently, the probability that they are all the same is $2^{-\Omega(T)}=2^{-\Omega(n)}$. Hence, the unfavorable situation occurs with probability $1-2^{-\Omega(n)}-2^{-\Omega(n)}=1-2^{-\Omega(n)}$.
\end{proof}

\section{On the choice of graph traversal}
\label{sec:choice_of_graph_traversal}

Theorem~\ref{thm:incremental-rls-any-bipartite} states that (generic) RLS is efficient with any con\-nec\-tiv\-i\-ty-preserving graph traversal. In the following we study the effect of using DFS- versus BFS-traversals and point out major differences on special cases of bipartite graphs.
To motivate this, consider a complete bipartite graph $G = (V_1 \cup V_2, E)$ with $n_1 = |V_1|$, $n_2 = |V_2|$ and $n_1 = n_2 = n/2$. Note that given an arbitrary starting node $v \in V_1$ there is a DFS-traversal that adds edges in an order such that after adding the first $n-1$ edges, the partial graph is an $n$-vertex path. Such a DFS-traversal can be easily constructed by following an edge to a node that was not yet connected to the growing connected component. This path has length $\Theta(n)$ and is a longest path in $G$, i.e., $\ell(G) = \Theta(n)$. Now consider a BFS-traversal and assume w.\,l.\,o.\,g.\ that we start in an arbitrary node $v \in V_1$. Now, according to the working principles of BFS, BFS adds all $n/2$ edges to the neighbors of $v_1$, first producing random walks of length at most~2 in the optimization steps of IR. Subsequently, for each vertex in~$V_2$, all $n_1-1$ edges to the remaining nodes in $V_1$ are added. Again, each IR step deals with random walks of length at most $3$. Hence, the length of the paths introduced by BFS is $\Theta(1)$ vs. $\Theta(n)$ for DFS (see Fig.~\ref{fig:complete-bipartite-dfs-and-bfs-traversals} for an illustration).

\begin{figure}
\centering
\begin{tikzpicture}[scale=1.13]
\foreach \x/\y/\nr/\order in {0/0/1/, 0/1/2/1, 1/0/3/2, 1/1/4/3, 2/0/5/4, 2/1/6/5, 3/0/7/6, 3/1/8/7} {
    \node[vertex] (v\nr) at (\x,\y) {$\order$};
}
\draw (v1) -- (v2) -- (v3) -- (v4) -- (v5) -- (v6) -- (v7) -- (v8);
\end{tikzpicture}
\hspace{0.5cm}
\begin{tikzpicture}[scale=1.13]
\foreach \x/\y/\nr/\order in {0/0/1/, 0/1/2/1, 1/0/3/5, 1/1/4/2, 2/0/5/6, 2/1/6/3, 3/0/7/7, 3/1/8/4} {
    \node[vertex] (v\nr) at (\x,\y) {$\order$};
}
\draw (v1) -- (v2);
\draw (v1) -- (v4);
\draw (v1) -- (v6);
\draw (v1) -- (v8);

\draw (v2) -- (v3);
\draw (v2) -- (v5);
\draw (v2) -- (v7);
\end{tikzpicture}
\caption{Example of possible first $n-1$ edge insertions following a DFS traversal (left) and BFS traversal (right) on a complete bipartite graph. Nodes are numbered with the iteration they are linked to the growing connected component. For visual clarity all other edges are not shown.}
\label{fig:complete-bipartite-dfs-and-bfs-traversals}
\end{figure}
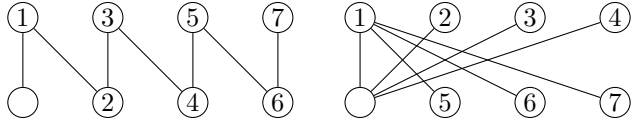

Since BFS visits the nodes in level order, level by level, we can substitute $\ell(G)$ with the diameter of the graph~$G$, denoted by $\diam(G)$, in the expected runtime bound. This observation is made mathematically rigorous in the following theorem.

\begin{theorem}
\label{thm:incremental-rls-bipartite-with-bfs}
On any bipartite graph, the total expected time of IR with \unaware RLS to incrementally build a proper 2-coloring is $O(\diam(G)n^2 + m)$ when edges are added in order of a breadth-first-search traversal.
\end{theorem}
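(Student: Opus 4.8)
The plan is to reuse the entire machinery from the proof of Theorem~\ref{thm:incremental-rls-any-bipartite} essentially verbatim and to sharpen only the single place where $\ell(G)$ enters, namely the bound on the maximum state of the conflict-resolving random walk. As there, in a graph-traversal order every inserted edge either closes an even cycle (no conflict, one fitness evaluation) or attaches a fresh degree-$1$ vertex $v$ through a tree edge $\{u,v\}$; in the latter case a fair lazy random walk on $\{0,1,\dots,M\}$ starts in state~$1$, where the state $1+\min\{d(v_1,v),d(v_2,v)\}$ encodes the current conflict edge $\{v_1,v_2\}$, state~$0$ is absorbing (conflict resolved) and $M$ is the reflecting barrier given by the first vertex of degree $>2$ (or the opposite leaf). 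The generic proof only uses $M\le\ell(G)$; I would instead show that a BFS order forces $M=O(\diam(G))$, after which the remaining computation is identical.

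The key structural input is specific to \emph{bipartite} BFS. Fix the BFS root $s$ of the current component and let $L(\cdot)=d(s,\cdot)$ be the BFS level. Because the graph is bipartite, the endpoints of any edge lie in different parts and hence have distances from $s$ of different parity; since an edge changes the distance by at most one, \emph{every} edge joins two consecutive levels $j$ and $j+1$, so there are no intra-level edges. Moreover every level is at most the eccentricity of $s$, which is at most $\diam(G)$, and at the moment the tree edge $\{u,v\}$ is inserted (so $L(v)=L(u)+1\le\diam(G)$) every vertex of level below $L(u)$ has already been fully processed and therefore carries its final degree in the partial graph.

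Now I would trace the maximal chain of degree-$2$ vertices along which the conflict can propagate, beginning at the fresh leaf $v$ and first descending towards $s$. I claim this descent is monotone in level: a degree-$2$ vertex $z\neq s$ reached from above must route its second edge to its BFS parent one level down, because the alternatives are excluded — a same-level neighbour cannot exist (no intra-level edges), and a second up-neighbour would, together with $z$'s parent, give $z$ degree $\ge 3$ and thus make it a reflecting vertex rather than a chain vertex. Hence the descent has length at most $L(v)\le\diam(G)$ and ends either at a reflecting vertex of degree $>2$ or at $s$. If it ends at $s$ with $\deg(s)=2$, the conflict may bounce into the second branch at $s$ and climb again; a symmetric argument shows that a local minimum of the level function along a degree-$2$ chain can occur only at $s$, so this returning branch has a single turning point and again length $O(\diam(G))$. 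Altogether the reflecting barrier satisfies $M=O(\diam(G))$.

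With this bound the rest copies Theorem~\ref{thm:incremental-rls-any-bipartite}: from state~$1$ the walk reaches state~$0$ within $O(M)=O(\diam(G))$ relevant steps by Lemma~\ref{lem:random-walk-tools}(1), each relevant step has waiting time $O(n)$, so each of the $n-1$ conflict-creating insertions costs $O(\diam(G)\,n)$ in expectation, while the $m-n+1$ cycle-closing insertions cost $O(1)$ each; summing yields $O(\diam(G)n^2+m)$. I expect the genuinely delicate step to be the chain analysis of the third paragraph — making rigorous that the level sequence along the conflict chain has essentially a single extremum — which rests entirely on the observation that, in bipartite BFS, a local minimum of the level function along a degree-$2$ chain must be the root.
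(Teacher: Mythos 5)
Your proposal is correct and takes essentially the same route as the paper's proof: reuse the random-walk machinery of Theorem~\ref{thm:incremental-rls-any-bipartite} verbatim and replace $\ell(G)$ by $O(\diam(G))$ as the bound on the reflecting barrier, exploiting the level structure of a BFS traversal. Your third paragraph (no intra-level edges in a bipartite BFS, monotone descent along degree-$2$ chains since every non-root vertex carries a parent edge, and local minima of the level function only at the root, giving at most three monotone segments each of length at most the eccentricity of $s$) is in fact a more rigorous justification of the step the paper handles informally by observing that BFS solves the unweighted single-source shortest-path problem, so the walk length is bounded by $s(G,u)\le\diam(G)$.
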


\begin{proof}
We focus on the maximum length of random walks that may occur during the optimization. First of all note that BFS traverses a graph in level order visiting all adjacent nodes first, nodes with distance two second and so on. Put differently, BFS solves the unweighted Single-Source-Shortest-Path (USSSP) problem. That is, given a starting node $u \in V$, the length of each path in a BFS-traversal until a previously seen node is visited again is bounded by the length of the longest shortest path $s(G, u)$ to any other vertex $v \in V \setminus \{u\}$ -- in terms of the number of edges on the path. Since $s(G, u)$ depends on the starting node, the length of the longest possible path produced by incrementally adding edges by any BFS traversal is upper bounded by the diameter $\diam(G) = \max_{u \in V} s(G, u)$, i.~e., the length of the longest shortest path in $G$. Adopting the waiting-time arguments of Theorem~\ref{thm:incremental-rls-any-bipartite} we obtain a runtime bound of $O(\diam(G)n^2 + m)$ for any BFS-traversal.
\end{proof}

This bound is tight on paths and depth-$k$ stars as for both graph classes $\diam(G)=\ell(G)$.

Even though the asymptotic runtime bounds are the same, e.g. on paths, it makes a huge difference for other sub-classes of bipartite graphs. As pointed out in the beginning of this section, on complete bipartite graphs $\ell(G) = \Theta(n)$ whereas $\diam(G) = \Theta(1)$, yielding a performance advantage of a factor of $n$ for BFS traversals. Similarly, on toroids, $\ell(G) = \Theta(n)$ and $\diam(G) = \Theta(\sqrt{n})$. 
As $\diam(G) \leq \ell(G)$ on any graph there is no advantage of using DFS and the usage of BFS shows similar or superior performance. Table~\ref{tab:dfs-vs-bfs-on-special-bipartite} gives an overview of the expected runtimes of RLS with DFS and BFS on sub-classes of bipartite graphs as well as further results obtained in the following sections.

\begin{table}[htbp]
\caption{Obtained runtime results for sub-classes of bipartite graphs. For $k$-ary trees we assume that $2 \le k = O(1)$. The toroid is assumed to have lengths $\sqrt{n} \times \sqrt{n}$. For \lRLS we use the choice of~$\lambda = \lambda^* := \max\{\lceil \log(\mathrm{diam}(G)n/m\rceil, 1\}$. The island model uses the optimal number of 3 islands. We use $\Theta$ where we have an explicit lower bound or the trivial one of $\Omega(m)$. Note that the island model has optimal performance $\Theta(m)$ on all bipartite graph classes.}
\label{tab:dfs-vs-bfs-on-special-bipartite}
\centering
\renewcommand*{\arraystretch}{1.3}
\renewcommand*{\tabcolsep}{6pt}
\begin{scriptsize}
\begin{tabular}{lccccccc}
\toprule
\textbf{Graph class} & $\ell(G)$ & $\diam(G)$ & \textbf{RLS with DFS} & \textbf{RLS with BFS} & \textbf{\makecell{Tailored\\ RLS with BFS}} & \textbf{\makecell{Tailored\\ $(1+\lambda)$ RLS with BFS}} & \textbf{\makecell{Island \\Model}} \\
\midrule
Complete $k$-ary tree & $\Theta(\log n)$ & $\Theta(\log n)$ & $O(n^2 \log n)$ & $O(n^2 \log n)$ & $O(n \log n)$ & $O(n \log \log n)$ & $\Theta(n)$\\
Toroid & $\Theta(n)$ & $\Theta(\sqrt{n})$ & $O(n^3)$ & $O(n^{5/2})$ & $O(n^{3/2})$ & $O(n \log n)$ & $\Theta(n)$ \\
$(\log n)$-dim.\ hypercube & $\Theta(n)$ & $\Theta(\log n)$ & $O(n^3)$ & $O(n^2 \log n)$ & $\Theta(n \log n)$ & $\Theta(n \log n)$ & $\Theta(n \log n)$ \\
Path & $\Theta(n)$ & $\Theta(n)$ & $\Theta(n^3)$ & $\Theta(n^3)$ & $\Theta(n^2)$ & $\Theta(n \log n)$ & $\Theta(n)$ \\
Star graph & $\Theta(1)$ & $\Theta(1)$ & $\Theta(n^2)$ & $\Theta(n^2)$ & $\Theta(n)$ & $\Theta(n)$ & $\Theta(n)$ \\
Complete bipartite & $\Theta(n)$ & $\Theta(1)$ & $O(n^3)$ & $\Theta(n^2)$ & $\Theta(n^2)$ & $\Theta(n^2)$ & $\Theta(n^2)$ \\
Depth-$k$ star & $\Theta(k)$ & $\Theta(k)$ & $\Theta(kn^2)$ & $\Theta(kn^2)$ & $\Theta(kn)$ & $O(n \log k)$ & $\Theta(n)$ \\
\bottomrule
\end{tabular}
\end{scriptsize}
\end{table}

For sake of completeness we close this section with a corollary on the runtime of IR with \aware RLS and BFS.
\begin{corollary}
\label{thm:incremental-conflict-aware-rls-complete-bipartite}
On any complete bipartite graph, the total expected time of IR with \aware RLS to incrementally build a proper 2-coloring is $O(\diam(G)n + m)$ when edges are added in order of a breadth-first-search traversal.
\end{corollary}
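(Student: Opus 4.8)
The plan is to combine the two refinements that are already in hand: the BFS bound on the length of the induced random walks from Theorem~\ref{thm:incremental-rls-bipartite-with-bfs} and the improved per-step waiting time of \aware RLS from Corollary~\ref{thm:incremental-aware-rls-any-bipartite}. First I would reuse the structural decomposition from the proof of Theorem~\ref{thm:incremental-rls-any-bipartite}: since $G$ is bipartite and edges arrive in traversal order, inserting an edge $\{u,v\}$ either closes an even cycle and creates no conflict (costing one fitness evaluation), or links a fresh degree-one vertex to the growing component and may create a single conflict. Exactly $n-1$ insertions are of the latter type and the remaining $m-(n-1)$ close cycles, so the total cost is $m-(n-1)$ for the conflict-free steps plus the sum of the resolution times of at most $n-1$ single-conflict events.

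Next I would analyse one such event. As in Theorem~\ref{thm:incremental-rls-any-bipartite}, the conflict performs a fair, possibly lazy random walk on $\{0,\dots,k\}$ with absorbing state $0$ (conflict resolved) and a reflecting state at the first vertex of degree greater than two, starting in state $1$. The point specific to BFS is that any path built before a previously seen vertex is revisited has length bounded by the eccentricity of the start vertex, hence by $\diam(G)$; this is exactly the observation underlying Theorem~\ref{thm:incremental-rls-bipartite-with-bfs}, so $k \le \diam(G)$. By part~1 of Lemma~\ref{lem:random-walk-tools} with $X_0=1$, absorption takes $O(\diam(G))$ relevant steps in expectation. The key gain over the \unaware analysis is the waiting time: since RLS never accepts an increase in the number of conflicts and each insertion creates at most one conflict, there is exactly one conflicting edge throughout the resolution, hence exactly two conflicting vertices. \Aware RLS flips one of these two vertices in every iteration, so every iteration is a relevant step --- either toward the leaf (state decreases) or into the reflecting vertex (rejected, a self-loop) --- and the waiting time per step drops from $\Theta(n)$ to $\Theta(1)$. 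The lazy version of Lemma~\ref{lem:random-walk-tools} absorbs only a constant factor, so each single-conflict event is resolved in $O(\diam(G))$ expected iterations.

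Summing up gives a total expected time of at most $(n-1)\cdot O(\diam(G)) + (m-n+1) = O(\diam(G)\,n + m)$, the claimed bound; for complete bipartite graphs it collapses to $O(m)$ since $\diam(G)=\Theta(1)$ and $m=\Theta(n^2)$. I expect the only delicate point to be justifying the ``exactly two conflicting vertices'' invariant rigorously, \ie that flipping a conflict vertex can never create a second \emph{accepted} conflict, since this is what underpins both the $\Theta(1)$ waiting time and the clean random-walk picture; I would also check that the self-loops introduced by the reflecting behaviour of \aware RLS leave the $O(\diam(G))$ step count intact.
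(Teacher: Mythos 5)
Your proposal is correct and is essentially the paper's intended argument: the corollary is stated without proof as the direct combination of Theorem~\ref{thm:incremental-rls-bipartite-with-bfs} (BFS bounds the random-walk range by $\diam(G)$) with the observation behind Corollary~\ref{thm:incremental-aware-rls-any-bipartite} (\aware RLS reduces the per-step waiting time from $O(n)$ to $\Theta(1)$ since there are always exactly two conflicting vertices), which is exactly what you spell out. Your added care about the single-conflict invariant and the self-loop at the reflecting vertex is sound and only sharpens the omitted details.
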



\section{Offspring Populations}
\label{sec:offspring}

We now consider the use of offspring populations in RLS. The \lRLS creates $\lambda$ offspring through independent mutations from the current search point, and then picks a best offspring that is compared against the parent as in RLS. Ties between offspring are broken uniformly at random. For simplicity, we only consider tailored RLS in the following, but it easy to derive bounds on generic RLS with offspring populations.
The following theorem quantifies the improved time bounds when using BFS and DFS.
\begin{theorem}
\label{thm:upper-bound-for-offspring-populations}
For a given connected graph~$G$, let $L(G)$ denote an upper bound on the length of any random walk; more specifically, $L := \diam(G)$ when using BFS and $L := \ell(G)$ for any other graph traversal. 
Then the expected time of tailored \lRLS is $O(\lambda m + \lambda 2^{-\lambda} L n)$.

For $\lambda^* := \max\{\lceil \log (Ln/m) \rceil, 1\}$ this is $O(\lambda^* m)$.
\end{theorem}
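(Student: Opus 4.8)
The plan is to reuse the single-conflict random-walk description from the proof of Theorem~\ref{thm:incremental-rls-any-bipartite} and to quantify how the offspring population changes its dynamics. As there, inserting an edge along a graph traversal either closes a cycle (no conflict, handled in $O(1)$ time) or attaches a fresh degree-one vertex $v$, creating at most one conflict. Since the algorithm is tailored, at every generation the at most two conflicting vertices are the only flip candidates, so each of the $\lambda$ offspring independently flips the endpoint closer to $v$ (decreasing the state) or the farther one (increasing it, if accepted) with probability $1/2$. The state space is $\{0,\dots,L\}$ with $0$ absorbing and $L$ a reflecting upper bound, where $L = \diam(G)$ under BFS and $L = \ell(G)$ otherwise, and the walk starts in state~$1$.

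First I would argue that in the interior the walk stays fair. For any state $2 \le i \le L-1$ every offspring that moves keeps exactly one conflict, so all $\lambda$ offspring tie; selecting a best offspring is then equivalent to picking a single offspring uniformly at random, which flipped either endpoint with probability $1/2$. Hence $\Prob{i \to i-1} = \Prob{i \to i+1} = 1/2$ and the offspring population grants no drift in the interior. The benefit materialises only at the absorbing boundary: at state~$1$ flipping $v$ resolves the conflict outright, so the unique best offspring has zero conflicts whenever at least one of the $\lambda$ offspring flips $v$. The conflict is thus resolved in a single generation with probability $1 - 2^{-\lambda}$, and only with the remaining probability $2^{-\lambda}$ does the walk escape to state~$2$.

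Next I would bound the expected number of generations per conflict by a renewal (excursion) argument. Each visit to state~$1$ is a Bernoulli trial: resolution with probability $q = 1 - 2^{-\lambda}$, otherwise an escape to state~$2$ followed by a fair-walk excursion that returns to state~$1$. By statement~1 of Lemma~\ref{lem:random-walk-tools} (applied to the walk on $\{1,\dots,L\}$ reflected at $L$ and started in state~$2$) such an excursion lasts $O(L)$ generations in expectation. Solving the renewal equation $\E{G} = 1 + (1-q)\bigl(O(L) + \E{G}\bigr)$ gives $\E{G} = O\!\left(1 + 2^{-\lambda} L\right)$ generations for $\lambda \ge 1$, since $q \ge 1/2$; each generation costs $\lambda$ evaluations.

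Finally I would aggregate: the $n-1$ conflict-creating insertions contribute $\lambda \cdot O(1 + 2^{-\lambda}L)$ evaluations each, the $m-n+1$ cycle-closing insertions cost $O(1)$ each, and folding the base cost into $O(\lambda m)$ via $n \le m+1$ yields the claimed $O(\lambda m + \lambda 2^{-\lambda} L n)$. Substituting $\lambda^* = \max\{\lceil \log(Ln/m)\rceil, 1\}$ makes $2^{-\lambda^*} \le m/(Ln)$, so $\lambda^* 2^{-\lambda^*} L n \le \lambda^* m$ and the bound collapses to $O(\lambda^* m)$ (the case $Ln \le m$, where $\lambda^* = 1$, being immediate). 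The main obstacle I anticipate is the interior-fairness claim under tie-breaking, together with a clean justification that the \emph{only} place where the population accelerates the walk is the single absorbing boundary at state~$1$; once that localisation is established, the excursion bound and the summation are routine.
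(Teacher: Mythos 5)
Your proposal follows essentially the same route as the paper's proof: the boundary boost of $1-2^{-\lambda}$ at state~1, RLS-like behaviour of the \lRLS in the interior of the walk, reuse of the fair-random-walk hitting-time bound, aggregation over at most $n$ conflict-creating insertions and $m$ total insertions, and the identical $\lambda^*$ computation. Your renewal equation $\E{G} = 1 + 2^{-\lambda}\bigl(O(L) + \E{G}\bigr)$ is a slightly more explicit packaging of the paper's one-line conclusion that each conflict costs $O(1 + 2^{-\lambda}L)$ generations; both are fine.

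However, the step you yourself flagged as the anticipated obstacle --- the interior-fairness claim --- is false as stated on general graphs. It is not true that for every state $2 \le i \le L-1$ ``every offspring that moves keeps exactly one conflict, so all $\lambda$ offspring tie.'' The conflict can reach an edge whose endpoint has degree greater than~2 \emph{in the interior} of the walk (e.g.\ the centre vertex of a depth-$k$ star), not only at the nominal cap $L$; indeed, in the paper's random-walk model the reflection occurs at the \emph{first} vertex of degree exceeding~2, which may lie well below $L$. At such a state, flipping the high-degree endpoint creates additional conflicts and is rejected, so the offspring do not tie: the \lRLS moves toward the target with probability $1-2^{-\lambda}$ and stalls with probability $2^{-\lambda}$, which differs from the fair RLS dynamics your excursion bound via Lemma~\ref{lem:random-walk-tools} assumes. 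The paper treats exactly this case separately, observing that the only accepted step is to flip the other endpoint and that ``having multiple offspring can only decrease the time until this step happens.'' Your upper bound survives because the deviation only accelerates the downward motion, so the \lRLS excursion time is stochastically dominated by the RLS excursion time --- but you need that one-sentence domination (or the paper's explicit case split on the degrees of the conflicting endpoints) in place of the universal-tie claim before invoking the fair-walk lemma. With that repair, the rest of your argument, including the bookkeeping of cycle-closing insertions and the collapse to $O(\lambda^* m)$, is correct and matches the paper.
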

\begin{proof}
Consider the situation after adding one edge, which leads to a conflict. The conflict is resolved in one generation if there is an offspring that flips the leaf node. This happens with probability $1-2^{-\lambda}$. With the converse probability $2^{-\lambda}$, all offspring flipped the leaf's neighbor and the conflict moved away from the added edge.

We argue that, while both end points of the conflicting edge have degree at least 2, \lRLS behaves like RLS. Assume both end points have degree~2. Since there is no way of resolving the conflict in one step, all offspring will have the same fitness. Since all offspring are generated independently and with identical distributions, we may assume w.\,l.\,o.\,g.\ that the first offspring is selected for survival. This means that the remaining offspring are irrelevant and \lRLS simulates a step of RLS. 
If one end point of the conflicting edge has degree larger than~2, flipping this end point leads to an offspring with a worse fitness. Hence the only accepted step is to flip the edge's other end point. Having multiple offspring can only decrease the time until this step happens.

Using our upper bound on RLS (Theorem~\ref{thm:incremental-rls-any-bipartite}), \lRLS resolves the conflict after any edge insertion after $O(1 + 2^{-\lambda} L)$ generations. Since one generation creates $\lambda$ evaluations, the number of evaluations is $O(\lambda + 2^{-\lambda} \lambda L)$. Since we only have at most $n$ random walks, the total time for solving random walks is $O(\lambda n + 2^{-\lambda} \lambda Ln)$. Iterations where no random walks are necessary make $\lambda$ evaluations. Together, this yields an upper bound of $\lambda m + O(\lambda 2^{-\lambda} Ln)$. 

For $\lambda^* = \max\{\lceil \log(Ln/m) \rceil, 1\}$, the last term simplifies to $O(\lambda^* m)$ if $\log(Ln/m) \ge 1$, or equivalently, $Ln \ge m$. Otherwise, the bound is dominated by the first term $O(\lambda^*m)$.
\end{proof}

For paths the upper bound from Theorem~\ref{thm:upper-bound-for-offspring-populations} is tight.
\begin{theorem}
\label{thm:lower-bound-for-offspring-populations}
The expected reoptimization time of tailored \lRLS on a path with any graph traversal is $\Omega(\lambda m + \lambda 2^{-\lambda} n^2)$.
\end{theorem}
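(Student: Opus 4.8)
The plan is to reuse the random-walk picture from the proof of Theorem~\ref{thm:incremental-rls-on-n-vertex-path} but to track what the offspring population does at the two ``resolvable'' ends of the walk. I would build the path incrementally from one of its leaves, so that after inserting the $i$-th edge the current graph is a subpath $v_0 - v_1 - \cdots - v_i$ whose freshly attached leaf is $v_i$. Because $v_i$ is isolated and therefore never part of a conflict before its edge appears, tailored \lRLS never touches its color, so $c(v_i)$ is still its independent uniformly random initial value; hence inserting edge~$i$ creates a conflict with probability exactly $1/2$, independently of the history. This gives $\Theta(m)$ conflicting insertions in expectation, exactly as in Theorem~\ref{thm:incremental-rls-on-n-vertex-path}.

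For the additive term $\Omega(\lambda m)$, I would simply note that each of the $\Theta(m)$ conflicting insertions needs at least one generation of \lRLS to be repaired, and that one generation costs $\lambda$ evaluations; linearity of expectation then gives $\Omega(\lambda m)$ evaluations. For the term $\Omega(\lambda 2^{-\lambda} n^2)$ I would analyse one conflict created at edge~$i$. As in the proof of Theorem~\ref{thm:upper-bound-for-offspring-populations}, the state is the distance of the conflicting edge from the leaf $v_i$, starting at state~$1$. At state~$1$ the conflicting vertices are the leaf $v_i$ and its degree-$2$ neighbour, and the conflict is resolved iff some offspring flips $v_i$; this fails, i.e.\ \emph{all} $\lambda$ offspring flip the neighbour, with probability exactly $2^{-\lambda}$, pushing the walk to state~$2$. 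Conditioned on this escape, the conflicting edge now lies strictly inside the path, where both endpoints have degree~$2$, so no offspring can resolve it: in every generation all offspring are equally fit and \lRLS performs a single fair $\pm 1$ step, exactly like RLS. The conflict can only be resolved at a leaf, that is, when the walk first returns to state~$1$ (near $v_i$) or first reaches state~$i$ (near $v_0$). By the gambler's-ruin estimate (third statement of Lemma~\ref{lem:random-walk-tools}), starting from state~$2$ the expected number of generations to first hit either end is $(2-1)(i-2)=i-2=\Omega(i)$, and the resolution time is at least this first-hitting time. Multiplying the conflict probability $1/2$, the escape probability $2^{-\lambda}$, the $\Omega(i)$ generations and the $\lambda$ evaluations per generation, and summing over $i$ via linearity, yields $\sum_i \Omega(\lambda 2^{-\lambda} i)=\Omega(\lambda 2^{-\lambda} n^2)$.

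Finally, since the two displayed quantities are each lower bounds on the \emph{same} expected optimization time, their maximum is a lower bound, and $\max(a,b)\ge (a+b)/2$ turns this into $\Omega(\lambda m + \lambda 2^{-\lambda} n^2)$, as claimed; any double counting between the two arguments is harmless because I only combine them via the maximum.

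The main obstacle will be the careful bookkeeping of the \lRLS dynamics at the two boundaries of the walk. I must argue rigorously that (i)~the only way to leave state~$1$ without repairing is for all $\lambda$ offspring to flip the interior vertex, an event of probability exactly $2^{-\lambda}$; (ii)~in the interior the $\lambda$ offspring collapse to a single fair $\pm 1$ step, so that the classical gambler's-ruin bound of Lemma~\ref{lem:random-walk-tools} genuinely applies; and (iii)~resolution can occur only at a boundary state, so that the first-hitting time of the boundary is a valid lower bound even though the boundaries are not perfectly absorbing (they reflect back with probability $2^{-\lambda}$, which only increases the true resolution time). A comparatively minor point is confirming that the per-edge conflict events stay independent and occur with probability $1/2$ in spite of the ongoing re-optimization, which holds because an as-yet-unattached leaf keeps its random initial color.
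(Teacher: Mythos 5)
Your proposal is correct and follows essentially the same route as the paper's proof: a conflict with probability $1/2$ per insertion, escape from the boundary state with probability exactly $2^{-\lambda}$ (all $\lambda$ offspring flipping the interior vertex), the observation from Theorem~\ref{thm:upper-bound-for-offspring-populations} that \lRLS collapses to a fair RLS walk in the interior, a pessimistic assumption that the conflict is resolved upon first hitting a boundary state, the gambler's-ruin bound of Lemma~\ref{lem:random-walk-tools}, and summation over $i$ with a factor $\lambda$ per generation. The only differences are cosmetic: you spell out details the paper delegates to the proofs of Theorems~\ref{thm:incremental-rls-on-n-vertex-path} and~\ref{thm:upper-bound-for-offspring-populations} (e.g., why the fresh leaf's color stays uniform, and the explicit $\Omega(\lambda m)$ accounting), and you combine the two bounds by plain linearity and a maximum rather than the Chernoff-based counting of random walks used in Theorem~\ref{thm:incremental-rls-on-n-vertex-path}, which is a harmless simplification for a lower bound in expectation.
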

\begin{proof}
The proof is similar to the lower bound for RLS on paths (Theorem~\ref{thm:incremental-rls-on-n-vertex-path}). 
Consider a random walk started after inserting the $i$-th edge. Recall that the random walk has states $0, \dots, i$ and both states 0 and~$i$ are goal states. Whenever the state of the random walk is $1$ or $i-1$, there is a probability of $1-2^{-\lambda}$ that one of the offspring finds a goal state. As argued in the proof of Theorem~\ref{thm:upper-bound-for-offspring-populations}, on states $2, \dots, i-2$ the \lRLS behaves like RLS. Hence, with probability $2^{-\lambda}$, state~2 is reached after the first generation and then \lRLS needs at least $i-3$ relevant steps in expectation to reach either state~1 or state~$i-1$. If this happens, we assume pessimistically that a proper coloring is found. Summing up expected times as in the proof of Theorem~\ref{thm:incremental-rls-on-n-vertex-path} implies the claim.
\end{proof}

\section{Island Models}
\label{sec:island}

We now consider island models that evolve several populations in parallel and communicate to exchange good solutions. More specifically, at each step of the IR process, there exist $\lambda$ islands that each run a tailored RLS. All islands are all started on the same graph after inserting a new edge, with the same initial coloring. The islands run independently until the first island has found a proper coloring; then the proper coloring is shared with all islands (ties broken arbitrarily but ensuring that all islands store the same proper coloring). Note that we implicitly use a complete graph as migration topology (though our main result applies to all topologies containing a triangle).  Algorithm~\ref{alg:increopt-islands} shows the respective pseudocode.

\begin{algorithm}[htb]
    \caption{Incremental Reoptimization (IR) ($G \!=\! (V, \{e_1, \dots, e_m\})$, $\pi$, $\mathcal{A}$) using an island model}
    \algsetup{indent=1.5em}
    \begin{algorithmic}[1]
        \STATE Let $G' = (V, E')$ be a graph with $n=|V|$ isolated vertices ($E' = \emptyset$).
        \STATE Let $x$ be a coloring of all vertices chosen uniformly at random.
        \FOR{$i = 1$ to $|E|$}
        \STATE Add edge $e_{\pi(i)}$ to $E'$.
        \STATE Run $\lambda$ tailored RLSs on $G'$ with $x$ as the initial search point. In every generation, check whether an island has obtained a desired coloring. If so, store the final search point in $x$.
        \ENDFOR
    \end{algorithmic}
    \label{alg:increopt-islands}
\end{algorithm}

We will show that independent evolution steps are more efficient than offspring populations. Our main result in this section is:
\begin{theorem}
\label{thm:island-model}
For any graph traversal order, the expected reoptimization time of the island model is $\Theta(\lambda m)$ for $\lambda \ge 3$. For $\lambda=3$ we get an optimal time of $\Theta(m)$.
\end{theorem}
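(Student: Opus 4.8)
The plan is to reduce the behavior of the island model, between two consecutive edge insertions, to a race between $\lambda$ independent random walks, and then to show that for $\lambda \ge 3$ this race is won in $O(1)$ expected generations. Recall from the proof of Theorem~\ref{thm:incremental-rls-any-bipartite} that inserting an edge creates at most one conflict; if it does, every island starts in the same configuration, namely state~$1$ of the fair random walk with absorbing state~$0$ corresponding to a resolved conflict. Since migration only takes place once some island has found a proper coloring, during the race the $\lambda$ islands evolve completely independently, each performing its own tailored-RLS random walk; this is the crucial difference from offspring populations, where all surviving individuals collapse onto a single shared state once the conflict has moved into the interior. The conflict is resolved as soon as the first island reaches an absorbing state, so the number of generations spent on this edge equals $\min_{1 \le i \le \lambda} T^{(i)}_0$, where $T^{(i)}_0$ is the absorption time of island~$i$.

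First I would establish the central tail estimate: for a single fair random walk started in state~$1$ with absorbing state~$0$, the survival probability satisfies $\Prob{T_0 > t} = O(t^{-1/2})$. This is the classical first-passage estimate for the symmetric walk on the non-negative integers --- the first-passage time to~$0$ has a distribution decaying like $t^{-3/2}$, so its tail decays like $t^{-1/2}$ --- and a reflecting boundary, a possible second absorbing leaf, and the occasional lazy step at the boundary can only reduce the survival probability and absorb constant factors, so the bound carries over to our setting. By independence of the islands, $\Prob{\min_i T^{(i)}_0 > t} = \prod_i \Prob{T^{(i)}_0 > t} = O(t^{-\lambda/2})$, and therefore
\[
\E{\min_i T^{(i)}_0} = \sum_{t \ge 0} \Prob{\min_i T^{(i)}_0 > t} = \sum_{t \ge 0} O\!\left(t^{-\lambda/2}\right).
\]
This series converges precisely when $\lambda/2 > 1$, that is for $\lambda \ge 3$, yielding an expected resolution cost of $O(1)$ generations per conflict; for $\lambda \le 2$ the sum diverges, which is the structural reason why three islands are needed.

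With the per-conflict cost bounded by $O(1)$ generations, I would finish by summing over all edge insertions. At most $n-1$ insertions can create a conflict, since only an edge attaching a fresh leaf can, and each costs $O(1)$ expected generations; the remaining $m-(n-1)$ cycle-closing insertions create no conflict and cost $O(1)$ generations each. Hence the total number of generations is $O(m)$, and since each generation performs $\lambda$ fitness evaluations (one per island) and a resolving island can share its proper coloring with the others over any migration topology containing a triangle, the total expected reoptimization time is $O(\lambda m)$. The matching lower bound $\Omega(\lambda m)$ is immediate, as each of the $m$ insertions forces at least one generation and hence $\lambda$ evaluations. Together this gives $\Theta(\lambda m)$, and the optimal value $\lambda = 3$ yields $\Theta(m)$.

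I expect the main obstacle to be making the survival estimate $\Prob{T_0 > t} = O(t^{-1/2})$ fully rigorous under the actual boundary behavior of tailored RLS --- in particular verifying that the reflecting (or second absorbing) boundary and the lazy self-loops at state~$1$, which occur when the anchoring vertex has degree larger than~$2$, do not spoil the $t^{-1/2}$ scaling --- and then carrying out the summation carefully enough that the convergence threshold lands exactly at $\lambda = 3$ rather than some neighboring value.
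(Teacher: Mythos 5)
Your proposal is correct and follows essentially the same route as the paper: the per-edge race of $\lambda$ independent random walks, the single-walk tail bound $\Prob{T_0 > t} = O(t^{-1/2})$ (via the classical first-passage estimate, as in the paper's Lemma~\ref{lem:independent-random-walks}), the product bound $O(t^{-\lambda/2})$ with convergence exactly at $\lambda \ge 3$, and the summation over $m$ insertions with the trivial $\Omega(\lambda m)$ lower bound. Your closing concern about the reflecting/lazy boundary behavior is legitimate but resolves as you anticipate: reflection and a second absorbing leaf only stochastically decrease the absorption time, and laziness costs a constant factor (cf.\ the final sentence of Lemma~\ref{lem:random-walk-tools}), so the $t^{-1/2}$ scaling is unaffected.
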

The surprising finding is that 3 islands are sufficient to obtain an asymptotically optimal reoptimization time. This is one of very few examples where island models perform better than offspring populations. The only other examples we are aware of in the context of rigorous runtime analysis are an artificially constructed function~\cite{Lassig2013} and a particular instance for the Eulerian Cycle problem~\cite{Lassig2014}. In the latter case, the speedup is exponential in~$\lambda$. To our knowledge, Theorem~\ref{thm:island-model} gives the first example where the speedup is not bounded by a function of~$\lambda$.

To prove Theorem~\ref{thm:island-model}, we first study independent fair random walks and analyze the time until \emph{the first} random walk reaches the target state. 
The following lemma may be of independent interest.
\begin{lemma}
\label{lem:independent-random-walks}
Consider $\eta$ independent random walks as defined in Lemma~\ref{lem:random-walk-tools}. Let $T_{\eta}$ be the first point in time any of the $\eta$ random walks reaches state~0, assuming that all random walks start in state~1. Then
\begin{enumerate}
    \item There is a constant $c > 0$ such that $\Prob{T_\eta \ge t} \le c^{-\eta} t^{-\eta/2}$.
    \item $\E{T_\eta} = \begin{cases}
        2k-2 & \eta=1\\
        \Theta(\log k) & \eta=2\\
        O(1) & \eta \ge 3
        \end{cases}$
\end{enumerate}
\end{lemma}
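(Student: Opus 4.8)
The plan is to reduce everything to the tail behaviour of a single walk and then exploit independence. Write $T^{(1)}, \dots, T^{(\eta)}$ for the individual absorption times, so that $T_\eta = \min_i T^{(i)}$ and, by independence, $\Prob{T_\eta \ge t} = \prod_{i=1}^\eta \Prob{T^{(i)} \ge t} = \Prob{T^{(1)} \ge t}^\eta$. Hence the whole lemma follows once I control the single-walk survival probability $q(t) := \Prob{T^{(1)} \ge t}$. The first step is to show $q(t) \le C/\sqrt{t}$ for a universal constant $C$ and all $t \ge 1$. I would do this by a monotone coupling of the reflected walk $Z$ (absorbing at $0$, reflecting at $k$) with the free walk $Y$ on $\{0,1,2,\dots\}$ started at $1$ with $0$ absorbing: feeding both the same $\pm1$ increments and letting $Z$ step down deterministically whenever it sits at $k$, an easy induction gives $Z_t \le Y_t$ throughout, hence $T_Z \le T_Y$ pointwise. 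Since $Y$ is an ordinary first-passage problem, the folklore estimate $\Prob{T_Y \ge t} = \Theta(1/\sqrt t)$ (via the reflection principle / ballot identity) yields $q(t) \le C/\sqrt t$. Raising to the $\eta$-th power gives $\Prob{T_\eta \ge t} \le C^\eta t^{-\eta/2}$, which is the first claim with $c := 1/C$.

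For the expectations I would sum the tail, $\E{T_\eta} = \sum_{t \ge 1}\Prob{T_\eta \ge t}$. The case $\eta = 1$ is just the exact value $\E{T_0 \mid X_0 = 1} = 2k-2$ from Lemma~\ref{lem:random-walk-tools}(1). For $\eta \ge 3$ the series $\sum_t C^\eta t^{-\eta/2}$ already converges because $\eta/2 > 1$, so $\E{T_\eta} = O(1)$; moreover $T_\eta \le T_3$ under the natural coupling, so this $O(1)$ is uniform in $\eta \ge 3$. The delicate case is $\eta = 2$, where the bound $C^2/t$ only gives a divergent harmonic sum, so I split at $t = 2k^2$. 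The low range contributes $\sum_{t \le 2k^2}\min(1, C^2/t) = O(\log k)$. For the high range the crude bound is useless and this is the crux of the argument: I would run an epoch argument over blocks of length $2k^2$, using Lemma~\ref{lem:random-walk-tools}(2) with $r=1$, which states that from any start state a single walk fails to be absorbed within $2k^2$ steps with probability at most $1/2$. By the Markov property each surviving walk clears a fresh epoch with probability at least $1/2$ independently, so $\Prob{T_2 \ge 2(j{+}1)k^2 \mid T_2 \ge 2jk^2} \le 1/4$; iterating and seeding the recursion with the sharp bound $\Prob{T_2 \ge 2k^2} \le C^2/(2k^2)$ gives $\Prob{T_2 \ge 2jk^2} \le \tfrac{C^2}{2k^2}\,4^{-(j-1)}$. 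Summing over blocks, the high range contributes only $O(1)$, so $\E{T_2} = O(\log k)$.

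It remains to match this with a lower bound $\E{T_2} = \Omega(\log k)$, for which I need a survival lower bound $q(t) \ge c_2/\sqrt t$. Reflection pushes the walk towards $0$, so the same coupling only gives $q(t) \ge \Prob{1 \le Y_s \le k-1 \ \forall s \le t}$, i.e. I must also force the free walk to stay below $k$. This fails near $t \approx k^2$, but I only need a polynomial window: restricting to $t \le k$, a walk from $1$ reaches level $k$ within $t \le k$ steps with probability $2^{-\Omega(k)}$ (it would need almost all steps upward), which is negligible against the survival probability $\Theta(1/\sqrt t) = \Omega(1/\sqrt k)$. Hence $q(t) \ge c_2/\sqrt t$ for $1 \le t \le k$, so $\Prob{T_2 \ge t} \ge c_2^2/t$ there and $\E{T_2} \ge c_2^2\sum_{t=1}^k 1/t = \Omega(\log k)$, completing $\Theta(\log k)$.

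The main obstacle is the upper bound for $\eta = 2$: the heavy $1/\sqrt t$ first-passage tail must be cut off at the relaxation scale $k^2$, and doing so requires combining the \emph{sharp} single-walk bound (to obtain the crucial $1/k^2$ prefactor) with the geometric decay of Lemma~\ref{lem:random-walk-tools}(2); using the $2^{-r}$ bound of that lemma on its own loses the prefactor and only yields a useless $\Theta(k^2)$ estimate. A secondary subtlety is that the reflecting barrier breaks the naive survival lower bound, which is why the lower bound is carried out only on the window $t \le k$.
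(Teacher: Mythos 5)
Your proposal is correct and follows the same skeleton as the paper's proof: reduce $T_\eta$ to the single-walk tail via independence, establish $\Prob{T_1 \ge t} = O(t^{-1/2})$, sum the tails (convergent for $\eta \ge 3$, with uniformity via $T_\eta \preceq T_3$), split the $\eta=2$ sum at the relaxation scale $\Theta(k^2)$ using the geometric decay from Lemma~\ref{lem:random-walk-tools}, and prove the $\Omega(\log k)$ lower bound inside the pre-reflection window. Two execution differences are worth recording. First, you justify the single-walk upper tail by a monotone coupling of the reflected walk with the free walk on $\{0,1,2,\dots\}$; the paper instead cites Feller's exact first-passage formula and asserts $\Prob{T_1 \ge t} = \Theta(t^{-1/2})$ for the walk \emph{with} the reflecting barrier. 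Strictly, the upper-bound direction needs exactly the domination you prove (and the two-sided $\Theta$ fails for $t \gg k^2$, though the paper only uses the lower bound for $t \le k-1$, where it notes the reflected and free walks coincide), so your coupling makes explicit a step the paper leaves implicit. Second, for the $\eta=2$ tail you run a two-walk epoch recursion over blocks of length $2k^2$ (conditional failure probability at most $1/4$ per block by independence and the Markov property), seeded with the sharp $C^2/(2k^2)$ bound so that everything beyond $2k^2$ contributes $O(1)$; the paper instead uses only the single-walk bound $\Prob{T_2 \ge t} \le 2^{-r}$ on $[2rk^2, 2(r+1)k^2)$ and pushes the cut out to $4k^2 \log k$, absorbing the larger head range into the same $O(\log k)$ harmonic sum. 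Both bookkeepings are valid: yours genuinely needs the sharp seed (as you correctly observe, the geometric decay alone gives only $O(k^2)$), while the paper's avoids any seed at the price of a $\log k$ factor in the cut point, which is free since the head is $\Theta(\log k)$ anyway. Your lower-bound window $t \le k$ with a $2^{-\Omega(k)}$ correction for hitting the barrier is a slightly weaker but equally sufficient version of the paper's exact-agreement observation for $t \le k-1$.
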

\begin{proof}
We first consider a single random walk, that is, $\eta=1$. Here the claim on the expectation follows from folklore argument, formalised in the first statement of Lemma~\ref{lem:random-walk-tools}.

It is known that $\Prob{T_1 \ge t} = \Theta(t^{-1/2})$. 
This can be derived as follows. By~\cite[III.7, Theorem~2]{Feller1968}
\[
\Prob{T_1 = t'} = \frac{1}{t'} \binom{t'}{\frac{t' + 1}{2}} \cdot 2^{-t'}
\]
where the binomial coefficient is 0 in case the second argument is non-integral.
For odd~$t'$ the above is at least $\Omega(t'^{-3/2})$.
Integrating over all odd values of $t' \ge t$ yields $\Prob{T_1 \ge t} = \Theta(t^{-1/2})$.

Let $c$ be the implicit constant in the upper bound of the $\Theta$ expression. For $\eta > 1$, in order for $T_\eta \ge t$, all $\eta$ random walks must not have reached the target in the first $t-1$ states. Since all random walks are independent, $\Prob{T_\eta \ge t} \le (\Prob{T_1 \ge t})^{\eta} \le c^{\eta} t^{-\eta/2}$.

For $\eta \ge 3$ it suffices to consider $\eta=3$ as $T_3$ stochastically dominates $T_\eta$ for $\eta \ge 3$. The expectation can then be derived as 
\[
\E{T_3} = \sum_{t} \Prob{T_3 \ge t} \le \sum_t c^{3} t^{-3/2}
= c^{3} \cdot O(1) = O(1).
\]
For $\eta=2$, we use the second statement of Lemma~\ref{lem:random-walk-tools} to infer that for all $r \in \mathbb{N}
$ and all $t \in [2rk^2, 2(r+1)k^2)$, we have $\Prob{T_2 \ge t} \le 2^{-r}$, thus $\sum_{t=2rk^2}^{2(r+1)k^2-1} \Prob{T_2 \ge t} \le 2k^2 \cdot 2^{-r}$. Thus, we get 
\begin{align*}
\sum_{t = 2rk^2}^\infty \Prob{T_2 \ge t} \le 2k^2 \sum_{s=r}^\infty 2^{-s} = 4k^2 \cdot 2^{-r}.
\end{align*}
Choosing $r := 2\log k$, this is at most $4$ and we get
\begin{align*}
& \sum_t \Prob{T_2 \ge t} = \sum_{t=1}^{(4k^2 \log k)-1} \Prob{T_2 \ge t} + \sum_{t=4k^2 \log k}^\infty \Prob{T_2 \ge t} \\
\le\;& \sum_{t=1}^{(4k^2 \log k)-1} \Prob{T_2 \ge t} + 4
\le \sum_{t=1}^{(4k^2 \log k)-1} c^2 t^{-1} + 4\\
=\;& c^2 H(4k^2 \log k) + 4 = O(\log k).
\end{align*}
A lower bound of $\Omega(\log k)$ follows from the fact that at least $k-1$ steps are needed to reach the reflecting state, and until then the process behaves as on an unbounded state space. Then $\E{T_2} \ge \sum_{t=1}^{k-1} \Prob{T_2 \ge t} \ge  \sum_{t=1}^{k-1} c' \cdot t^{-1} \ge c' H(k-1) = \Omega(\log k)$ where $c' > 0$ is the implicit constant in the lower bound of $\Theta(t^{-1/2})$.
\end{proof}

Now we are prepared to prove Theorem~\ref{thm:island-model}.
\begin{proof}[Proof of Theorem~\ref{thm:island-model}]
We show that the expected number of generations for finding a proper coloring after each edge insertion is $O(1)$ if a random walk is necessary. If an added edge leads to a conflict, the $\lambda$ islands perform $\lambda$ independent random walks as described in Lemma~\ref{lem:independent-random-walks}. Applying said lemma with $\eta := \lambda$ yields the claimed bound of $O(1)$ generations. Multiplying by $\lambda$ for the number of evaluations and summing over $m$ edge insertions yields the claim.
\end{proof}

\section{Conclusions}
Evolutionary algorithms have been applied to a wide range of dynamic optimization problems. We have shown that dynamic evolutionary optimization approaches can also be useful to solve a given static problem if the problem instance is fed to the algorithm in an incremental fashion. 

For 2-coloring bipartite graphs, the simple RLS is effective on all graph instances if the order of the edges is given based on popular graph traversals.
This includes graphs where RLS fails with an overwhelming probability in the static case. The order of edges provided is essential: for a worst-case order or a random order, RLS fails on trees with an overwhelming probability. However, every graph traversal leads to polynomial expected times. Comparing popular graph traversals like depth-first search and breadth-first-search shows that the latter is more effective as performance guarantees only depend on the diameter of the graph, whereas for the former they depend on the length of the longest simple path.

Furthermore, we have shown that offspring populations in the \lRLS lead to an exponential speedup for appropriate choices of~$\lambda$, since the probability of making the right decision for resolving a new conflict immediately is amplified.
Surprisingly, island models using parallel evolution to rediscover proper colorings are even more effective. With only 3 islands, the island model achieves the best possible runtime of $\Theta(m)$ for all graphs with $m$ edges. This is the first example of a proven speedup with islands that is not bounded in the number of islands. Island models are also more robust with respect to the choice of graph traversal and the graph instance as the expected time for the island model only depends on the number of edges, for every graph traversal and every graph. 

Future work could consider whether the incremental approach would also work on graphs with a larger number of colors and whether it proves useful for other combinatorial problems.

\section*{Acknowledgment}
This research has been supported by the Australian Research Council (ARC) through grant DP160102401.

\bibliographystyle{unsrt}  
\bibliography{coloring,reoptimization}  

\end{document}